\documentclass[journal]{IEEEtran}

\usepackage{amsmath,amssymb,amsthm,mathtools}
\usepackage{bm}
\usepackage{booktabs}
\usepackage{graphicx}
\usepackage{tikz}
\usepackage{cite}
\usepackage[hidelinks]{hyperref}
\usepackage[caption=false,font=footnotesize]{subfig}
\usetikzlibrary{arrows.meta,calc,positioning}

\definecolor{figred}{RGB}{238,35,35}
\definecolor{figblue}{RGB}{24,52,235}
\definecolor{figgreen}{RGB}{28,100,39}
\definecolor{figorangefill}{RGB}{250,218,196}
\definecolor{figbluefill}{RGB}{222,231,255}
\definecolor{figgreenfill}{RGB}{229,241,220}

\tikzset{
  panel/.style={rounded corners=1.4pt, line width=.55pt},
  axis/.style={black, line width=.62pt,
    -{Stealth[length=3.0pt,width=2.5pt]}},
  leader/.style={line width=.72pt,
    -{Stealth[length=3.5pt,width=3.0pt]}},
  flow/.style={black, line width=1.8pt,
    -{Stealth[length=5.5pt,width=5.2pt]}},
  title/.style={font=\fontsize{11.4}{12.0}\selectfont\bfseries,
    inner sep=0pt},
  subtitle/.style={font=\fontsize{8.5}{9.2}\selectfont\itshape,
    inner sep=0pt},
  annoLg/.style={font=\fontsize{7.9}{9.0}\selectfont, inner sep=1pt},
  small/.style={font=\fontsize{7.15}{8.1}\selectfont, inner sep=.5pt},
  mathlab/.style={font=\fontsize{8.6}{9.3}\selectfont, inner sep=.5pt},
  labelbox/.style={fill=white, fill opacity=.96, text opacity=1,
    rounded corners=.8pt, inner sep=1.5pt},
}

\hypersetup{
  pdftitle={Certified Safety Radii in Forecast-Error Space for Wasserstein Distributionally Robust Small Signal Stability-Constrained AC Optimal Power Flow via Lifted Spectrahedral Containment}
}

\newtheorem{theorem}{Theorem}
\newtheorem{proposition}{Proposition}
\newtheorem{lemma}{Lemma}

\newif\ifprintreferences
\printreferencestrue

\begin{document}

\title{Certified Safety Radii in Forecast-Error Space
for Wasserstein Distributionally Robust Small Signal Stability-Constrained
AC Optimal Power Flow via Lifted Spectrahedral Containment}


\author{Ziqi Zhang,~\IEEEmembership{Student Member,~IEEE} and
	Xi Chen,~\IEEEmembership{Member,~IEEE}

	\thanks{
		Z. Zhang and X. Chen are with the College of Automation Engineering, Nanjing University of Aeronautics and Astronautics, Nanjing 211100, China
		(E-mail: 1124562662@qq.com, chenxi0514@nuaa.edu.cn). } 
}
\maketitle

\begin{abstract}
Directly robustifying small-signal stability in AC optimal power flow (OPF) is challenging since the stability boundary in the original uncertainty space is implicit, highly nonconvex, and changes with the operating decision. 
This paper exploits an alternative geometry. 
For a fixed model-specific stability certificate admitting suitable physical lifts, the small-signal stability requirement becomes an affine positive-semidefinite (PSD) constraint in the lifted variables, thereby defining a convex certified safe region. 
Instead of approximating the nonlinear instability boundary itself, we optimize a sample-wise safe radius in the original uncertainty space and certify, in the lifted space, that the entire power-flow image of the corresponding uncertainty ball is contained in the convex stability region.
To this end, a componentwise Perron certificate guarantees existence, uniqueness, and Jacobian regularity of the target AC power-flow branch throughout each ball. 
An adjoint elimination then provides an exact affine–quadratic representation of the stability-relevant quantities, while rigorous matrix-remainder bounds convert their nonlinear variation into finite robust PSD constraints. 
The resulting radii are certified lower bounds on the distances from empirical samples to failure and can therefore be coupled directly to the distance-based reformulation of a Wasserstein distributionally robust chance constraint, without directly approximating the instability boundary.  
Numerical studies demonstrate the effectiveness of the proposed framework.
\end{abstract}

\begin{IEEEkeywords}
Distributionally robust optimization, 
small-signal stability, Wasserstein ambiguity sets.
\end{IEEEkeywords}

\section{Introduction}
\label{sec:introduction}

\IEEEPARstart{F}{or} each empirical forecast-error sample, the exact
sample-distance representation of a 1-Wasserstein distributionally robust
chance constraint asks for one number: how far is the sample from the
closed failure set?\cite{chen2024chance}
In AC optimal power flow (OPF)
with small-signal stability (SSS) requirements, this number is hidden
behind the equilibrium map.  A forecast error has no direct stability
label.  It first perturbs the nodal injections, then induces an equilibrium
on a nonlinear AC power-flow (PF) branch, and only at that equilibrium can
operating limits and the linearized dynamics be assessed.  The first loss
of safety may therefore be the loss of the target PF branch, a singular
PF Jacobian, an operating-limit violation, or a loss of SSS.  Moreover, all
of these failure boundaries move with the dispatch decision.  The
statistical risk model and the physical system thus meet at a specific
missing interface: a certified sample-to-failure distance measured in the
same forecast-error coordinates and ground metric as the data.

The problem contains an exploitable geometric asymmetry.  For the dynamic
model specified in each application, a model-specific SSS theorem can be
written, after suitable physical lifting, as an affine positive-semidefinite
(PSD) matrix inequality in equilibrium-dependent quantities. 
This paper exploits the convex geometry of the lifted certificate without
moving the data or the transportation metric out of the physical
forecast-error space.  Instead of constructing an approximation of the
entire failure boundary, we certify a ball around each empirical sample
whose complete nonlinear PF image remains inside the fixed lifted safety
region.

\subsection{Related Work and Positioning}
\label{subsec:related_work}

Closest-bifurcation and closest-security-boundary methods have established
systematic procedures for computing critical operating points, limiting
directions, and control margins in multidimensional parameter spaces
\cite{dobson1993closest,alvarado1994closest,
	gomes2017closest,li2018margin}.
Related adversarial-distance formulations have also examined the smallest
perturbation that causes an optimization model to fail, including
DC-OPF infeasibility \cite{chevalier2026adversarial}.  This body of work is
organized around locating a critical boundary point and is particularly
valuable for mechanism diagnosis, margin assessment, and corrective-control
design.  The Wasserstein interface considered here requires a different
output: a one-sided lower distance for every empirical forecast-error
sample to the joint loss of the target PF branch, PF regularity, operating
feasibility, or the declared SSS property.  We therefore use independently
computed boundary distances as benchmarks for radius tightness, while the
optimization itself is based on certified interior containment rather than
nearest-boundary search.

A second body of work characterizes SSS under multidimensional
uncertainty.  Small-signal admissible regions, robust SSS regions,
polynomial bifurcation surfaces, and structured stability radii describe
how stability changes over uncertain injection or dynamic-parameter
spaces
\cite{pan2016ssar,pan2018rsssr,shen2022bifurcation,
	jin2026robustregion}.
Probabilistic and risk-based studies instead propagate uncertainty to
critical damping ratios, spectral indices, or instability probabilities
\cite{preece2014probabilistic,preece2015risk}.
These approaches provide region-level or distribution-level descriptions
of uncertain stability.  The object required in the present work is
sample-wise and metric-specific: the reported quantity must be a certified
lower distance in the operational forecast-error coordinates, and its
failure event also includes the nonlinear PF branch and operating limits.
Accuracy of a fitted region and the one-sided validity
of such a distance are therefore distinct properties.

SSS constraints, including formulations under uncertainty, have also been
incorporated directly into economic dispatch
\cite{hamon2013partI, 
	pareek2021convexification,wang2024twostage,yu2025stochasticrobust,chu2025distributional}.
These studies establish the broader class of stability-constrained
optimization, including its uncertainty-aware variants, and employ different
uncertainty objects, stability representations, and solution mechanisms.  The present
paper is positioned within this class.  Its contribution is not the broad
category of stochastic, robust, or distributionally robust SSS-OPF, but
the construction of a certified sample-distance interface for operational
forecast errors whose consequences are evaluated through the nonlinear
AC equilibrium.

The two ends of the required construction are also well developed.
Fixed-point conditions for balanced distribution-network PF models and
convex restrictions for parameterized PF equations provide sufficient
solvability guarantees over prescribed parameter regions
\cite{bolognani2016existence,dvijotham2018solvability,
nguyen2019inner,lee2019convexrestriction}.
Monotonicity-based methods construct voltage domains containing at most
one PF solution \cite{dvijotham2015differential}.
Robust convex restrictions extend these ideas to AC-OPF under uncertain
injections \cite{lee2021robustopf}.  We use these results together with
explicit PF-Jacobian regularity, target-branch propagation, and downstream
safety-matrix containment.  At the statistical end,
Wasserstein distributionally robust optimization provides data-driven
performance guarantees and tractable reformulations
\cite{esfahani2018wasserstein,duan2018wasserstein,chen2024chance}.
The PF literature certifies an equilibrium region,
whereas the Wasserstein risk formulation aggregates supplied distances.
The interface addressed in this paper is the nonlinear pullback between
them: from a fixed convex safety certificate in lifted equilibrium
coordinates, through the target AC-PF branch, to a certified lower
distance in the original forecast-error metric.

\subsection{Research Gap and Contributions}
\label{subsec:contributions}
A key difficulty lies in the interface between small-signal stability
certification and data-driven distributional robustness. Small-signal
stability is evaluated at an AC equilibrium that depends implicitly on
both the dispatch and the forecast error, whereas sample-distance
Wasserstein reformulations require distances to failure directly in the
original uncertainty space. These two descriptions are separated by the
nonlinear, branch-dependent mapping from forecast errors to the
corresponding AC equilibrium and then to stability and operating safety,
leaving the relevant sample-to-failure distances implicit and difficult
to embed directly in OPF without repeatedly solving closest-failure
problems.
Accordingly, the central question addressed in this work is how to obtain
tractable, decision-dependent lower bounds on these distances while
preserving the nonlinear AC power-flow branch and a rigorous
small-signal-stability guarantee. 
For each empirical sample \(\widehat\xi_i\), we therefore seek a
decision-dependent certified radius \(r_i\) in the original forecast-error
space.  The corresponding weighted-\(\ell_1\) neighborhood is required to
remain entirely within the certified safe region, which itself is contained
in the target-model safe region.  Consequently, \(r_i\) is a rigorous lower
bound on both the certified and physical sample-to-failure distances.  These
sample-wise radii provide the distance information required by the subsequent
Wasserstein risk model without explicitly constructing the nonlinear failure
boundary or repeatedly solving closest-failure problems.

The main contributions are summarized as follows.
\begin{itemize}
	
	\item
	\emph{Certified sample-wise stability distances:}
	We construct decision-dependent safety radii \(r_i\) in the original
	forecast-error space that rigorously lower-bound the implicit
	sample-to-failure distances associated with the target PF branch, PF
	regularity, operating limits, and the selected small-signal-stability
	certificate.
	
	\item
	\emph{Nonlinear PF-to-PSD containment:}
	We develop a self-consistent component-Perron PF tube and combine it with
	an exact adjoint elimination and matrix-level quadratic-remainder
	certificates, preserving safety-output cancellations while reducing
	whole-sample-ball safety to finite SDP/SOCP constraints.
	
	\item
	\emph{End-to-end stability-certified WDRO-OPF:}
	We couple these decision-dependent certified radii with the existing exact
	sample-distance Wasserstein reformulation and jointly optimize them with
	the dispatch in a sequential conic master, avoiding embedded
	closest-failure computations.
	
\end{itemize}

The remainder of this paper is organized as follows.
Section~\ref{sec:risk_interface} defines the fixed lifted-PSD certificate,
the target-model safe sets, and the sample-distance Wasserstein risk
interface.
Section~\ref{sec:certified_radii} develops the regular-root PF tube, the exact
adjoint safety-matrix identity, and the finite robust containment conditions.
Section~\ref{sec:wasserstein_master} presents the sequential
stability-certified WDRO-OPF, the propagation of the target-branch label
between accepted iterates, and the distributional guarantee.
Section~\ref{sec:case_studies} reports the numerical studies, and
Section~\ref{sec:conclusion} concludes the paper.

\section{Problem Formulation and Distance-Based Risk}
\label{sec:risk_interface}

\subsection{Fixed Lifted-PSD Stability Certificate}

The dispatch vector \(u\in\mathcal U\subseteq\mathbb R^{n_u}\) contains the scheduled generation,
renewable curtailment, reserves, and continuous control setpoints.  The
uncertainty vector \(\xi\in\mathbb R^m\) records realized-minus-forecast
errors in available renewable power and in active and reactive demand.
These physical forecast errors remain the coordinates of the statistical
model.  Fixed allocation and participation factors map them affinely to
nodal injections.
The steady-state vector \(x\in\mathbb R^{n_x}\) contains the rectangular bus voltages
\(V_b=e_b+\mathrm{j}f_b\) and the balancing and voltage-magnitude
auxiliaries required by the operating model.
We write its equilibrium equations as \(F(x,u,\xi)=0\), where
\(F:\mathbb R^{n_x}\times\mathbb R^{n_u}\times
\mathbb R^m\to\mathbb R^{n_x}\).
Specifically, \(F\) stacks active- and reactive-power balance at the buses,
the reference-angle and voltage-control equations, the distributed
balancing equation that accounts for AC losses, and any exact
voltage-magnitude identities used by the device model.  Certificate-specific
equilibrium coordinates, when required, are appended to \(x\) together with
their exact graph equations in \(F\).  The construction below requires the
resulting equilibrium equations and safety outputs to retain the stated
affine--quadratic form.  A verified exact root at a base equilibrium
and a continuation rule identify the target solution branch,
denoted by \(x_{\rm PF}(u,\xi)\); \(J_xF\) is the power-flow (PF) Jacobian
along that branch. 
Small-signal stability is evaluated for a specified dynamic model.
Its controller orders, network and load dynamics, algebraic variables,
parameter range, and symmetry-reduced perturbation subspace define the model
assumptions. 
Let \(z=h(x,u,\xi)\in\mathbb R^{n_z}\) collect the equilibrium quantities
entering the fixed stability certificate and the operating-limit blocks.
Before solving the OPF, we fix one
complete certificate with the affine symmetric pencil
\begin{equation}
K^{\rm case}(z,u)
=K_0+K_u(u)+\sum_{a=1}^{n_z}z_aK_a ,
\label{eq:affine_pencil}
\end{equation}
where \(K_u\) is affine and \(K_a\) are fixed symmetric matrices.  The
online stability requirement is \(K^{\rm case}(z,u)\succeq\tau_KI\), with
a prescribed margin \(\tau_K>0\).  Its feasible set in \((z,u)\) is a
spectrahedron and is therefore convex.

Several model-specific small-signal stability results admit the
affine-PSD interface in \eqref{eq:affine_pencil}. Examples include
projected network--device stability matrices for lossless grid-forming
(GFM) systems \cite{bacic2026graph}. For grid-following (GFL) systems, gSCR/gOSCR conditions
become PSD network-strength inequalities once the model-specific critical
threshold has been certified over the operating domain
\cite{dong2019gscr,liu2024goscr}. For heterogeneous GFM/GFL/HVDC and
dynamic-load subsystems, passivity and dissipativity conditions yield KYP
or descriptor-KYP inequalities after the storage matrices, supply rates,
and multipliers have been fixed offline
\cite{rantzer1996kyp,camlibel2009descriptorkyp}.
When a model-specific stability result requires multiple online matrix
conditions, they are imposed jointly.
Sections~\ref{sec:certified_radii} and \ref{sec:wasserstein_master} are
stated for this general interface.  
This study instantiates it
with the Iva certificate of \cite{bacic2026graph}.  This specialization
assumes a fixed, connected, lossless effective network with
\(B_{bc}=B_{cb}\ge0\) for \(b\ne c\), under the susceptance and shunt sign
conventions of that reference.  Loads are fixed-power static loads, with
instantaneous load-only buses eliminated by a valid Kron reduction.  Each
active node uses standard \(q\)--\(V\) droop with fixed \(\beta_b^q>0\) in
an unsaturated operating mode.  Conditions~1 of \cite{bacic2026graph} hold
uniformly on the certified domain: the modified device-response inverses
are real-rational and pole-free on the closed right half-plane, their
Hermitian parts are positive definite there, and they satisfy a common
high-frequency coercivity bound.  The effective node set and an
orthonormal basis \(O_\perp\) of the uniform-angle complement are fixed.
For this specialization, introduce the positive equilibrium-voltage
coordinate \(\upsilon_b:=|V_b|>0\) and append the exact identity
\(\upsilon_b^2=e_b^2+f_b^2\) to the lossless AC equations.  Thus,
\(x_{\rm Iva}=\operatorname{col}(e,f,\upsilon,\ldots)\), while
\(F_{\rm Iva}\) stacks \(F_{\rm AC}^{\rm lossless}(e,f,u,\xi)\) and these
magnitude identities.
The certified domain enforces \(\upsilon_b\ge\underline v_b>0\), thereby
selecting the physical magnitude branch.  Define
\(c_{bc}=e_be_c+f_bf_c\),
\(\sigma_{bc}=f_be_c-e_bf_c\), and \(w_b=e_b^2+f_b^2\), and collect
\(z_{\rm Iva}=\operatorname{col}(\{c_{bc}\},\{\sigma_{bc}\},
\{w_b\},\{\upsilon_b\})\).  Standard \(q\)--\(V\) droop gives
\(k_b^q=\beta_b^q/\upsilon_b\).  The complete projected pencil is therefore
\begin{equation}
	\begin{aligned}
		K_{\rm Iva}(z_{\rm Iva})
		&=O_\perp^{\mathsf T}\Xi_{\rm Iva}(z_{\rm Iva})O_\perp,\\
		\Xi_{\rm Iva}(z_{\rm Iva})
		&=M_{\rm net}(c,\sigma,w)+
		\begin{bmatrix}
			0&0\\
			0&\operatorname{diag}_b(\upsilon_b/\beta_b^q)
		\end{bmatrix}.
	\end{aligned}
	\label{eq:iva_pencil}
\end{equation}
Here, \(M_{\rm net}\) is the steady-state network Jacobian in angle and log-voltage
coordinates.  Its entries are affine in \((c,\sigma,w)\), while the
inverse-droop block is affine in \(\upsilon\) because \(\beta^q\) is fixed.
Consequently, \(K_{\rm Iva}\) is affine in \(z_{\rm Iva}\), and
\(\{z_{\rm Iva}:K_{\rm Iva}(z_{\rm Iva})\succeq\tau_KI\}\) is a
spectrahedron.  The full lift used below is
\(z=\operatorname{col}(z_{\rm Iva},z_{\rm op})\), where \(z_{\rm op}\)
contains any additional affine--quadratic outputs required by the
operating-limit blocks and may be empty; \(K_{\rm Iva}\) has zero
coefficients on these coordinates.  The magnitude identities in
the augmented equilibrium equations remain part of the nonlinear physical graph;
they are not relaxed by this lifted geometry.  Under
these Iva-specific assumptions, \cite[Th.~1]{bacic2026graph} makes
\(K_{\rm Iva}\succ0\) necessary and sufficient for asymptotic stability of
the specified linearized closed loop after removal of the uniform-angle
mode.  The buffered condition \(K_{\rm Iva}\succeq\tau_KI\),
\(\tau_K>0\), is the inner certificate used here.
The central difficulty remains its physical preimage
\(\xi\mapsto x_{\rm PF}(u,\xi)\mapsto
h(x_{\rm PF}(u,\xi),u,\xi)\), which is nonlinear and defined through the
selected PF branch.

\subsection{Certified Safe Set and Sample-Safe Radius}

Let \(g_\ell(x,u,\xi)<0\), \(\ell=1,\ldots,n_g\), collect the voltage,
generation, reserve, and line-flow limits.  Let
\(\alpha_\perp(A_{\rm dyn}^{\rm tgt})\) denote the spectral abscissa of the
target-model linearization.  Here, \(g_\ell\),
\(A_{\rm dyn}^{\rm tgt}\), and \(J_xF\) are evaluated at
\(x=x_{\rm PF}(u,\xi)\).  For a fixed dispatch, the physical safe set is
\begin{equation}
\begin{split}
\mathcal S^{\rm phys}(u):=\{\xi:\;
x_{\rm PF}(u,\xi)
\text{ exists on the target branch},\\
 J_xF\text{ is nonsingular},\quad g_\ell<0\ \forall\ell,\quad
 \alpha_\perp(A_{\rm dyn}^{\rm tgt})<0\}.
\end{split}
\label{eq:physical_safe_set}
\end{equation}
The superscript \({\rm phys}\) refers throughout to this specified target
model.  A higher-fidelity interpretation requires a separate uniform bridge
covering its equilibria, regularity, limits, and dynamics. 
The fixed pencil defines the certified safe set
\begin{equation}
\begin{split}
\mathcal S^{\rm cert}(u):=\{\xi:\;&x_{\rm PF}(u,\xi)
\text{ exists on the target branch},\\
&J_xF\text{ is nonsingular},\quad g_\ell<0\ \ \forall\ell,\\
&K^{\rm case}(h(x_{\rm PF}(u,\xi),u,\xi),u)\succ0\}.
\end{split}
\label{eq:certificate_safe_set}
\end{equation}
The certificate theorem gives
\(\mathcal S^{\rm cert}(u)\subseteq\mathcal S^{\rm phys}(u)\). 
Let \(D\in\mathbb R^{m\times m}\) be a nonsingular metric matrix fitted independently of the OPF
samples and set \(\|\delta\|_D=\|D\delta\|_1\).  The data and transport
remain in forecast-error space; \(D\) only sets their scale and directional
cost.  The strict conditions in \eqref{eq:certificate_safe_set} make
\(\mathcal S^{\rm cert}(u)\) open on the selected branch.  Its failure set
\(\mathcal F^{\rm cert}(u)=
\mathbb R^m\setminus\mathcal S^{\rm cert}(u)\) is therefore closed.
For sample \(\widehat\xi_i\), define
\begin{equation}
d_i^{\rm cert}(u)
:=\inf_{\xi\in\mathcal F^{\rm cert}(u)}
\|D(\xi-\widehat\xi_i)\|_1 .
\label{eq:cert_distance}
\end{equation}
This distance reaches the first loss of the target PF
branch, Jacobian regularity, an operating limit, or certified
small-signal stability.  Computing it directly entails a nonconvex search
over an implicitly defined failure boundary.
We instead optimize a radius \(r_i\ge0\) by certifying the set inclusion
\begin{equation}
\mathcal B_i(r_i):=
\{\widehat\xi_i+\delta:\|D\delta\|_1\le r_i\}
\subseteq\mathcal S^{\rm cert}(u).
\label{eq:sample_ball_inclusion}
\end{equation}
Denote \(d_i^{\rm phys}(u)\) as the analogous distance to
\(\mathbb R^m\setminus\mathcal S^{\rm phys}(u)\), then
$
\mathcal B_i(r_i)\subseteq\mathcal S^{\rm cert}(u)
\,\Longrightarrow\,
0\le r_i\le d_i^{\rm cert}(u)\le d_i^{\rm phys}(u).
$

\begin{figure}[ht]
	\centering
	\resizebox{0.9\columnwidth}{!}{%
		\begin{tikzpicture}[x=1cm,y=1cm,line cap=round,line join=round]
			\draw[panel,figred] (0.05,12.90) rectangle (8.75,17.35);
			\node[title,text=figred] at (4.40,17.08) {Forecast-error space $\xi$};
			\node[subtitle] at (4.40,16.73) {original uncertainty space};
			
			\draw[axis] (0.65,13.05) -- (8.10,13.05);
			\draw[axis] (0.65,13.05) -- (0.65,16.60);
			\node[mathlab] at (8.22,13.15) {$\xi_1$};
			\node[mathlab] at (0.64,16.74) {$\xi_2$};
			
			\path[draw=figred,fill=figorangefill,line width=.72pt]
			plot[smooth cycle,tension=.63] coordinates {
				(1.35,14.00) (1.05,14.75) (1.35,15.55) (2.05,16.20)
				(3.10,16.30) (3.85,15.85) (4.55,15.72) (5.15,16.15)
				(6.25,16.08) (6.95,15.60) (6.72,14.88) (7.12,14.25)
				(6.35,13.72) (5.20,13.54) (4.20,13.78) (3.35,13.58)
				(2.58,13.82) (2.03,13.62)
			};
			
			\path[draw=figblue,fill=figbluefill,line width=.66pt]
			(1.65,15.76) -- (2.18,16.22) -- (2.70,15.76) -- (2.18,15.30) -- cycle;
			\fill[figblue] (2.18,15.76) circle (1.45pt);
			\node[mathlab] at (2.18,15.55) {$\widehat\xi_j$};
			
			\path[draw=figblue,fill=figbluefill,line width=.66pt]
			(3.15,14.60) -- (4.25,15.42) -- (5.35,14.60) -- (4.25,13.78) -- cycle;
			\fill[figblue] (4.25,14.60) circle (1.65pt);
			\node[mathlab] at (4.25,14.35) {$\widehat\xi_i$};
			
			\node[annoLg,labelbox,text=figred,align=center] (safe) at (7.48,16.5)
			{nonconvex physical\\safe region};
			\draw[leader,figred] (safe.south) -- (6.42,15.32);
			
			\node[annoLg,labelbox,text=figblue,align=center] (ball) at (6.48,13.31)
			{weighted-$\ell_1$ ball $\mathcal B_i(r_i)$};
			\draw[leader,figblue] (ball.north) -- (4.92,14.47);
			
			\draw[flow] (4.38,12.82) -- (4.38,12.28);
			\node[annoLg,anchor=west,align=left] at (4.75,12.48)
			{nonlinear PF map\\[-1pt]$x_{\rm PF}(u,\xi)$};
			
			\draw[panel,black] (0.05,7.62) rectangle (8.75,12.15);
			\node[title] at (4.40,11.90) {$x$-space};
			\node[subtitle] at (4.40,11.57) {AC equilibrium / PF state};
			
			\node[small] at (4.40,11.29)
			{\textcolor{figblue}{exact PF root image} $\subseteq$ \textcolor{figred}{PF branch image}};
			\node[small] at (4.40,11.04)
			{\textcolor{figblue}{exact PF root image} $\subseteq$ \textcolor{figblue}{state tube}};
			\node[small,font=\fontsize{7.15}{8.1}\selectfont\itshape] at (4.40,10.79)
			{no ordering assumed between the two outer sets};
			
			\draw[axis] (0.40,7.76) -- (8.25,7.76);
			\draw[axis] (0.40,7.76) -- (0.40,10.60);
			\node[mathlab] at (8.39,7.91) {$x_1$};
			\node[mathlab] at (0.39,10.73) {$x_2$};
			
			\path[draw=figblue,dashed,dash pattern=on 3.0pt off 2.4pt,line width=.70pt]
			plot[smooth cycle,tension=.58] coordinates {
				(2.75,9.90) (3.05,10.35) (3.46,10.18) (4.28,9.98)
				(5.26,9.95) (5.52,9.43) (6.22,9.22) (7.18,9.13)
				(7.37,8.62) (6.70,8.38) (5.43,8.22) (4.42,8.14)
				(3.58,8.45) (3.72,9.01) (3.18,9.35) (2.82,9.45)
			};
			
			\path[draw=figred,line width=.72pt]
			plot[smooth cycle,tension=.55] coordinates {
				(3.18,9.75) (4.04,10.02) (5.12,10.35) (6.18,10.44)
				(6.88,10.08) (6.98,9.57) (6.42,9.36) (6.95,9.02)
				(7.05,8.72) (6.35,8.42) (5.25,8.20) (4.32,8.37)
				(3.92,8.80) (4.16,9.18) (3.65,9.40)
			};
			
			\path[draw=figblue,fill=figbluefill,line width=.72pt]
			plot[smooth cycle,tension=.70] coordinates {
				(4.57,8.72) (4.76,9.03) (5.06,9.09) (5.27,8.91)
				(5.52,8.96) (5.72,8.70) (5.54,8.45) (5.26,8.42)
				(5.02,8.33) (4.78,8.45)
			};
			\fill[figblue] (5.12,8.69) circle (1.55pt);
			
			\node[annoLg,text=figblue,align=left,anchor=west] (tube) at (0.58,9.66)
			{tractable state tube\\(outer enclosure\\of exact PF roots)};
			\draw[leader,figblue] (tube.north) -- (3.6,10.15);
			
			\node[annoLg,text=figred,align=center,anchor=west] (branch) at (7.28,10.4)
			{nonconvex\\PF branch\\image};
			\draw[leader,figred] (branch.west) -- (6.88,10.08);
			
			\node[annoLg,text=figblue,align=left,anchor=west] (exactpf) at (0.58,8.39)
			{exact PF root\\image of\\certified ball};
			\draw[leader,figblue] (exactpf.east) -- (4.83,8.68);
			
			\node[mathlab,font=\fontsize{8.2}{8.9}\selectfont,anchor=east]
			(nominal) at (8.17,8.11)
			{$x_{\rm PF}(u,\widehat\xi_i)$};
			\draw[black,dashed,line width=.52pt,-{Stealth[length=2.8pt,width=2.2pt]}]
			(nominal.north) -- (5.12,8.69);
			
			\draw[flow] (4.38,7.54) -- (4.38,7.10);
			\node[annoLg,anchor=west,align=left] at (4.72,7.34)
			{$z=h(x,u,\xi)$\\[-1pt]then certified convex outer enclosure};
			
			\draw[panel,figgreen] (0.05,2.62) rectangle (8.75,7.02);
			\node[title,text=figgreen] at (4.40,6.77) {Lifted space $z$};
			\node[subtitle] at (4.40,6.44) {certificate space};
			
			\draw[axis] (0.40,2.93) -- (8.25,2.93);
			\draw[axis] (0.40,2.93) -- (0.40,6.23);
			\node[mathlab] at (8.39,2.91) {$z_1$};
			\node[mathlab] at (0.39,6.36) {$z_2$};
			
			\draw[figgreen,fill=figgreenfill,line width=.72pt]
			(4.25,4.45) ellipse [x radius=2.50,y radius=1.18];
			
			\node[annoLg,text=figgreen,align=center] (certset) at (7.68,5.70)
			{convex lifted\\certificate set};
			\draw[leader,figgreen] (certset.west) -- (5.72,5.28);
			
			\draw[figgreen,fill=white,rounded corners=1.5pt,line width=.58pt]
			(1.92,4.05) rectangle (4.08,5.10);
			\node[small,font=\fontsize{6.7}{7.2}\selectfont,align=center] at (3.00,4.575)
			{\scalebox{.83}{$K^{\rm case}(z,u)\succeq \tau_K I$}\\[1.0pt]
				case-specific PSD\\certificate};
			
			\draw[figblue,dashed,dash pattern=on 3.0pt off 2.4pt,line width=.70pt]
			(5.25,4.27) ellipse [x radius=1.10,y radius=.82];
			
			\path[draw=figblue,fill=figbluefill,line width=.72pt]
			plot[smooth cycle,tension=.72] coordinates {
				(4.38,4.14) (4.46,4.61) (4.71,4.84) (5.03,4.76)
				(5.25,4.86) (5.50,4.65) (5.73,4.52) (5.84,4.17)
				(5.58,3.93) (5.28,4.00) (5.05,3.85) (4.69,3.86)
			};
			\fill[figblue] (5.14,4.30) circle (1.60pt);
			
			\node[annoLg,text=figblue,align=center] (outerz) at (7.68,4.70)
			{certified\\convex outer\\enclosure};
			\draw[leader,figblue] (outerz.west) -- (6.35,4.27);
			
			\node[annoLg,text=figblue,align=center] (exactz) at (7.68,3.67)
			{exact lifted\\image};
			\draw[leader,figblue] (exactz.west) -- (5.50,4.08);
			
			\draw[figblue,fill=figbluefill,rounded corners=1.3pt,line width=.65pt]
			(0.15,1.42) rectangle (1.10,2.14);
			\node[small,font=\fontsize{6.8}{7.6}\selectfont,anchor=west,align=left]
			at (1.25,1.78)
			{=\quad exact set\\\phantom{=\quad}(exact mapped /\\\phantom{=\quad}lifted image)};
			
			\draw[figblue,fill=figbluefill,dashed,dash pattern=on 3.0pt off 2.4pt,
			rounded corners=1.3pt,line width=.65pt]
			(4.48,1.42) rectangle (5.43,2.14);
			\node[small,font=\fontsize{6.8}{7.6}\selectfont,anchor=west,align=left]
			at (5.58,1.78)
			{=\quad certified outer\\\phantom{=\quad}enclosure\\\phantom{=\quad}(state / lifted space)};
			
			\draw[figred,fill=figorangefill,rounded corners=1.3pt,line width=.65pt]
			(0.15,0.33) rectangle (1.10,1.05);
			\node[small,font=\fontsize{6.8}{7.6}\selectfont,anchor=west,align=left] at (1.25,0.69)
			{=\quad nonconvex physical\\\phantom{=\quad}set / region};
			
			\draw[figgreen,fill=figgreenfill,rounded corners=1.3pt,line width=.65pt]
			(4.48,0.33) rectangle (5.43,1.05);
			\node[small,font=\fontsize{6.8}{7.6}\selectfont,anchor=west,align=left] at (5.58,0.69)
			{=\quad convex lifted\\\phantom{=\quad}certificate set};
		\end{tikzpicture}
	}
	\caption{Geometry of a certified sample-safe radius.  A weighted
		\(\ell_1\) ball \(\mathcal B_i(r_i)\), centered at the empirical
		forecast-error sample \(\widehat\xi_i\), is propagated through the
		nonlinear target-branch PF map.  Its exact root image (solid blue) is
		contained in a tractable state tube (dashed blue); the tube and the full
		nonconvex PF-branch image (red) need not contain one another.  The exact
		lifted image is enclosed by a certified convex set that is required to lie
		inside the model-specific spectrahedron
		\(K^{\rm case}(z,u)\succeq\tau_KI\) (green).  With the operating-limit
		blocks treated in the same way, this proves
		\(\mathcal B_i(r_i)\subseteq\mathcal S^{\rm cert}(u)
		\subseteq\mathcal S^{\rm phys}(u)\) and hence
		\(0\le r_i\le d_i^{\rm cert}(u)\le d_i^{\rm phys}(u)\).}
	\label{fig:sample_safe_geometry}
\end{figure}
Fig.~\ref{fig:sample_safe_geometry} illustrates this inclusion across the
forecast-error, PF-state, and lifted spaces.

\subsection{Exact Wasserstein Risk Aggregation}

For realized-minus-forecast
error samples \(\widehat\xi_1,\ldots,\widehat\xi_N\), let
\(\widehat{\mathbb P}_N=N^{-1}\sum_{i=1}^N
\delta_{\widehat\xi_i}\) be the empirical distribution.  Using
\(\|\cdot\|_D\) as the transportation cost, consider the 1-Wasserstein
ambiguity set
$
\mathbb B_\rho(\widehat{\mathbb P}_N)
:=\{\mathbb Q:W_1^D(\mathbb Q,\widehat{\mathbb P}_N)\le\rho\}.
$
Such empirical Wasserstein sets admit finite-sample coverage guarantees
under standard tail assumptions \cite{esfahani2018wasserstein}.  Because
\(\mathcal S^{\rm cert}(u)\subseteq\mathcal S^{\rm phys}(u)\), controlling
certificate failure also controls target-model failure.
For \(\rho>0\), \(\alpha\in(0,1)\), and the closed set
\(\mathcal F^{\rm cert}(u)\), the sample-distance reformulation of
\cite{chen2024chance} gives
\begin{equation}
\sup_{\mathbb Q\in\mathbb B_\rho(\widehat{\mathbb P}_N)}
\mathbb Q[\xi\in\mathcal F^{\rm cert}(u)]\le\alpha
	\label{eq:dr_chance}
\end{equation}
if and only if there exist \(t\ge0\) and \(s_i\ge0\) such that
$
d_i^{\rm cert}(u) \ge t-s_i,\, 
\rho+\frac{1}{N}\sum_{i=1}^Ns_i \le\alpha t, \,\, i=1,\ldots,N.
$
The certified distance ordering allows the exact distances
to be replaced safely by
$
r_i \ge t-s_i,\, s_i\ge0, 
\rho+\frac{1}{N}\sum_{i=1}^Ns_i \le\alpha t,\, t\ge0 
\,\, i=1,\ldots,N.
$
Conditional on the true distances, the reformulation 
aggregates the Wasserstein risk exactly.  The formulation's conservatism
has two sources: the certified lower bounds
\(r_i\le d_i^{\rm cert}(u)\), and any gap between the fixed certificate and
the target-model stability condition.   
Let $[N]:=\{1,\ldots,N\}$, and let \(C(u)\) denote the operating cost.
The resulting Wasserstein distributionally
robust OPF (WDRO-OPF) has the following oracle form:
\begin{equation}
	\begin{aligned}
		\min_{u,r,t,s}\quad & C(u)\\
		\mathrm{s.t.}\quad
		& \xi^{\rm nom}\in\mathcal S^{\rm cert}(u),\quad
		\mathcal B_i(r_i)\subseteq\mathcal S^{\rm cert}(u),\ \forall i\in[N],\\
		& r_i\ge t-s_i,\quad r_i,s_i\ge0,\ \forall i\in[N],\\
		& \rho+\frac{1}{N}\sum_{i=1}^N s_i\le\alpha t,\quad
		u\in\mathcal U,\quad t\ge0 .
	\end{aligned}
	\tag{P-oracle}\label{eq:oracle_problem}
\end{equation}
The nominal condition in \eqref{eq:oracle_problem} enforces certified
feasibility at the prescribed forecast, for which \(\xi^{\rm nom}=0\) under
the usual error convention.  The sample blocks control distributional risk.
 
\section{Certified Sample-Safe Radii}
\label{sec:certified_radii}

Section~\ref{sec:risk_interface} reduces the distributionally robust
chance constraint to a sample-wise geometric requirement: for each empirical
sample \(\widehat\xi_i\), we must find a radius \(r_i\) such that every
forecast-error realization in
$
\mathcal B_i(r_i)
=
\left\{
\widehat\xi_i+\delta:
\|D\delta\|_1\le r_i
\right\}
$
remains inside the certified safe set.
The difficulty is that safety cannot
be checked directly in the forecast-error coordinates.  Each realization
first determines an equilibrium through the nonlinear and branch-dependent
AC power-flow equations, and the operating and small-signal stability
conditions must then be verified at that equilibrium.
The required inclusion is established through
\begin{equation}
	\begin{aligned}
		\mathcal B_i(r_i)
		&\xrightarrow{\rm PF}\bar x_i+\mathcal Y_i(\beta_i)\\
		&\xrightarrow{\rm adjoint}
		\{\mathsf M_\kappa\}_{\kappa\in\mathcal K_{\rm saf}}
		\xrightarrow{\rm PSD}\mathcal S^{\rm cert}(u).
	\end{aligned}
	\label{eq:radius_certificate_chain}
\end{equation}
Here, \(\bar x_i+\mathcal Y_i(\beta_i)\) is a certified state tube containing
all target-branch equilibria generated by the sample ball, while
\(\{\mathsf M_\kappa\}_{\kappa\in\mathcal K_{\rm saf}}\) denotes the
stability and operating safety-matrix blocks evaluated over that tube.
The three arrows in \eqref{eq:radius_certificate_chain} mark successive
certificates applied to the same physical PF graph.
The first
establishes existence, uniqueness, and Jacobian regularity of the target
PF branch throughout the uncertainty ball.
The second uses exact adjoint
identities to expose the dependence of every safety block on the dispatch,
forecast error, and a quadratic state remainder.
The third bounds these
remainders at the matrix level and enforces robust PSD containment over the
entire tube.
Together, these three steps give
$
	\mathcal B_i(r_i)
	\subseteq
	\mathcal S^{\rm cert}(u)
	\subseteq
	\mathcal S^{\rm phys}(u),
$
and therefore
$
	0\le r_i
	\le d_i^{\rm cert}(u)
	\le d_i^{\rm phys}(u).
$
Accordingly, this section constructs a finite conic certificate for a
sample-centered ball contained in the nonlinear safe set.
For each sample \(i\), the certificate is constructed around a PF anchor
\((\bar x_i,\bar u,\widehat\xi_i)\) associated with the currently accepted
dispatch \(\bar u\).  All local sensitivity coefficients, quadratic
majorants, Perron scalings, adjoint quantities, and matrix-remainder
bounds are then held fixed while the conic master updates \(u\), \(r_i\),
and the tube variables introduced below.  The sequential-iteration
superscript is suppressed for clarity.  The derivation uses exact
coefficients.  In the implementation, sparse linear solves and subsequent
enclosures are verified with outward rounding \cite{rump2010verification};
their one-sided numerical errors are absorbed into the fixed majorants and
matrix tails.
 
\subsection{Branch-Preserving Component-Perron PF Tube}
\label{subsec:pf_tube}

Set \(y=x-\bar x_i\), \(\Delta u=u-\bar u\), and
\(\delta=\xi-\widehat\xi_i\).  Under the rectangular-coordinate
affine--quadratic contract of Section~\ref{sec:risk_interface}, including
any case-specific equilibrium augmentation, the AC equations are quadratic
in the state and affine in \((u,\xi)\); hence
\begin{equation}
	F(\bar x_i+y,u,\widehat\xi_i+\delta)
	=F_i^0+J_i y+B_i\Delta u+C_i\delta+Q_i(y),
	\label{eq:pf_exact_quadratic}
\end{equation}
Here,
\(F_i^0:=F(\bar x_i,\bar u,\widehat\xi_i)\),
\(J_i:=\nabla_xF(\bar x_i,\bar u,\widehat\xi_i)\),
\(B_i:=\nabla_uF(\bar x_i,\bar u,\widehat\xi_i)\), and
\(C_i:=\nabla_\xi F(\bar x_i,\bar u,\widehat\xi_i)\).  Since \(F\) is
quadratic in \(x\),
\([Q_i(y)]_\ell=y^{\mathsf T}H_{i\ell}^{\rm s}y\), where
\(H_{i\ell}^{\rm s}:=\frac12\nabla_{xx}^2
F_\ell(\bar x_i,\bar u,\widehat\xi_i)\).  Equation
\eqref{eq:pf_exact_quadratic} is an exact algebraic identity, not a truncated
Taylor model.  The residual \(F_i^0\) is retained so that a floating-point PF
center is not treated as an exact root.

Let \(M_i=J_i^{-1}\), and let \(m_{ip}^{\mathsf T}\) denote its \(p\)-th
row.  The PF equation is equivalent to the fixed-point problem
\begin{equation}
	y=\Phi_i(y;u,\delta)
	:=-M_i\bigl(F_i^0+B_i\Delta u+C_i\delta+Q_i(y)\bigr).
	\label{eq:pf_fixed_point}
\end{equation}
Fixed-point and convex-restriction methods provide sufficient PF
solvability guarantees
\cite{bolognani2016existence,dvijotham2018solvability,
nguyen2019inner,lee2019convexrestriction}, while monotonicity-based
voltage domains certify that the domain contains at most one PF solution
\cite{dvijotham2015differential}.  Robust convex restrictions were
subsequently extended to AC-OPF under uncertain injections
\cite{lee2021robustopf}.  Here, the componentwise self-mapping and
Perron-scaled contraction conditions are imposed uniformly over each
sample-centered forecast-error ball.  They certify a unique PF root and
a nonsingular PF Jacobian within the tube; separate center-root and
continuation conditions associate this root with the target PF branch
before its matrix-valued safety image is enclosed.
Choose a fixed positive scaling \(\varsigma_i\in\mathbb R_{++}^{n_x}\),
let \(S_i=\operatorname{Diag}(\varsigma_i)\), and define the component tube
\begin{subequations}\label{eq:component_tube}
	\begin{align}
		\mathcal Y_i(\beta_i)
		&:=\{y:\ |y_p|\le \varsigma_{ip}\beta_{ip},\ p=1,\ldots,n_x\},
		\label{eq:component_tube_set}\\
		\beta_{ip}&\ge0,\qquad \chi_{ip}\ge \beta_{ip}^2,
		\quad p=1,\ldots,n_x .
		\label{eq:component_square_epigraph}
	\end{align}
\end{subequations}
Route-specific applicability restrictions are imposed as additional tube or
safety-block conditions.  For the Iva specialization only,
the augmented magnitude equations are included in \(F\), and the component tube also
satisfies
\(\bar\upsilon_{ib}-\varsigma_{i,\upsilon_b}
\beta_{i,\upsilon_b}\ge\underline v_b>0\) at every bus \(b\), so the
entire tube remains on the positive magnitude branch; here
\(\bar\upsilon_{ib}\) is the voltage-magnitude coordinate of the sample
anchor \(\bar x_i\).
The inequalities in \eqref{eq:component_square_epigraph} are rotated-SOC
representable.  Define
\begin{equation}
	G_{ip}:=\sum_{\ell=1}^{n_x}m_{ip,\ell}H_{i\ell}^{\rm s},
	\qquad A_{ip}:=S_iG_{ip}S_i .
	\label{eq:pf_row_quadratic}
\end{equation}
For each \(p=1,\ldots,n_x\) and face sign
\(\varepsilon\in\{-1,+1\}\), a fixed nonnegative vector
\(d_{ip}^{\varepsilon}\) is used only after verifying
\begin{equation}
	\operatorname{Diag}(d_{ip}^{\varepsilon})+\varepsilon A_{ip}\succeq0.
	\label{eq:signed_dm_condition}
\end{equation}
Consequently,
\(-\varepsilon y^{\mathsf T}G_{ip}y
\le\sum_{q=1}^{n_x}d_{ipq}^{\varepsilon}\chi_{iq}\) throughout
\(\mathcal Y_i(\beta_i)\).  Using the exact support function of the weighted
\(\ell_1\) ball, the two signed self-mapping conditions are
\begin{equation}
	\begin{aligned}
		&-\varepsilon m_{ip}^{\mathsf T}(F_i^0+B_i\Delta u)
		+r_i\bigl\|D^{-\mathsf T}C_i^{\mathsf T}m_{ip}\bigr\|_\infty \\
		&\hspace{16mm}
		+\sum_{q=1}^{n_x}d_{ipq}^{\varepsilon}\chi_{iq}
		\le \varsigma_{ip}\beta_{ip},
		\quad \varepsilon\in\{-1,+1\}.
	\end{aligned}
	\label{eq:component_selfmap}
\end{equation}
They guarantee
\(\Phi_i(\mathcal Y_i(\beta_i);u,\delta)
\subseteq\mathcal Y_i(\beta_i)\) for every
\(\|D\delta\|_1\le r_i\).
To establish uniqueness and Jacobian regularity, define the nonnegative
comparison matrix, for \(p,k=1,\ldots,n_x\),
\begin{equation}
	[\mathcal L_i(\beta_i)]_{pk}
	:=2\sum_{q=1}^{n_x}
	\bigl|[G_{ip}]_{kq}\bigr|\varsigma_{iq}\beta_{iq}.
	\label{eq:component_lipschitz}
\end{equation}
A fixed Perron scaling \(\omega_i\in\mathbb R_{++}^{n_x}\) and a numerical
margin \(\eta_J\in(0,1)\) give the affine contraction condition
\begin{equation}
	\mathcal L_i(\beta_i)\omega_i
	\le(1-\eta_J)\omega_i
	\quad\text{componentwise}.
	\label{eq:perron_contraction}
\end{equation}

\begin{theorem}[Sample-ball regular-root certificate]
	\label{thm:pf_branch_ball}
	Suppose that \eqref{eq:pf_exact_quadratic} holds and \(J_i\) is
	nonsingular.  If
	\eqref{eq:component_tube}--\eqref{eq:perron_contraction} hold, then, for every
	\(\|D\delta\|_1\le r_i\), there exists a unique solution
	\begin{equation}
		x_{\rm PF}(u,\widehat\xi_i+\delta)
		\in\bar x_i+\mathcal Y_i(\beta_i).
		\label{eq:pf_solution_in_tube}
	\end{equation}
	The solution depends continuously on \(\delta\), and \(J_xF\) is
	nonsingular throughout the certified tube.  Let \(x_i^c\) denote the exact
	root at \(\delta=0\).  If \(x_i^c\) has been identified with the target PF
	branch at initialization or by Proposition~\ref{prop:branch_label}, then
	the roots throughout the sample ball inherit this label.
	\end{theorem}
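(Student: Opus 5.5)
The plan is a Brouwer (or Schauder-free, finite-dimensional) self-map argument for existence, followed by a componentwise contraction argument, based on the Perron vector \(\omega_i\), for uniqueness and Jacobian regularity. Continuation then transports the branch label.

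\emph{Step 1 (self-map).} Fix \(\delta\) with \(\|D\delta\|_1\le r_i\) and \(y\in\mathcal Y_i(\beta_i)\). The \(p\)-th component of \(\Phi_i\) is
\[
-m_{ip}^{\mathsf T}(F_i^0+B_i\Delta u)-m_{ip}^{\mathsf T}C_i\delta-y^{\mathsf T}G_{ip}y,
\]
because \(m_{ip}^{\mathsf T}Q_i(y)=y^{\mathsf T}G_{ip}y\). The three terms are bounded separately:
\begin{itemize}
\item for the \(\delta\)-term, write \(m_{ip}^{\mathsf T}C_i\delta=(D^{-\mathsf T}C_i^{\mathsf T}m_{ip})^{\mathsf T}(D\delta)\) and apply H\"older (\(\ell_1\)/\(\ell_\infty\) duality), which gives the bound \(r_i\|D^{-\mathsf T}C_i^{\mathsf T}m_{ip}\|_\infty\);
\item for the quadratic term, substitute \(y=S_i\tilde y\) with \(|\tilde y_q|\le\beta_{iq}\). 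Condition \eqref{eq:signed_dm_condition} then gives \(-\varepsilon\tilde y^{\mathsf T}A_{ip}\tilde y\le\sum_q d^\varepsilon_{ipq}\tilde y_q^2\le\sum_q d^\varepsilon_{ipq}\chi_{iq}\), using \(d\ge0\) and \(\chi_{iq}\ge\beta_{iq}^2\).
\end{itemize}
With these, the two signed inequalities \eqref{eq:component_selfmap} yield \(\varepsilon[\Phi_i(y)]_p\le\varsigma_{ip}\beta_{ip}\) for \(\varepsilon=\pm1\). Hence \(\Phi_i(\cdot;u,\delta)\) maps the compact convex box \(\mathcal Y_i(\beta_i)\) into itself. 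It is continuous, so Brouwer gives a fixed point \(y\), and \(\bar x_i+y\) is then a root of \(F\) by \eqref{eq:pf_exact_quadratic}, since \(J_i\) is invertible.

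\emph{Step 2 (uniqueness, regularity).} For \(y,y'\) in the box, \(y^{\mathsf T}G_{ip}y-y'^{\mathsf T}G_{ip}y'=(y+y')^{\mathsf T}G_{ip}(y-y')\) after symmetrizing \(G_{ip}\). This gives
\[
|\Phi_i(y)-\Phi_i(y')|\le\mathcal L_i(\beta_i)\,|y-y'|
\]
componentwise, because \(|y_q+y'_q|\le2\varsigma_{iq}\beta_{iq}\). Now use the weighted max-norm \(\|v\|_\omega=\max_p|v_p|/\omega_{ip}\). Condition \eqref{eq:perron_contraction} makes \(\Phi_i\) a contraction with factor \(1-\eta_J\) in this norm, so the fixed point in the box is unique; Banach's theorem gives existence again as a bonus.

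For regularity, note that \(J_xF(\bar x_i+y)=J_i(I-\nabla_y\Phi_i(y))\). The Jacobian \(\nabla_y\Phi_i\) has entries bounded by \(\mathcal L_i\) in the same manner (row \(p\) is \(-2G_{ip}y\)). Its induced \(\omega\)-norm is therefore at most \(1-\eta_J<1\), so \(I-\nabla\Phi_i\) is invertible by a Neumann series, and hence \(J_xF\) is nonsingular throughout the tube.

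\emph{Step 3 (continuity and branch label).} Continuity in \(\delta\) follows from the uniform contraction. \(\Phi_i\) is affine in \(\delta\), so \(\|y(\delta)-y(\delta')\|_\omega\le\eta_J^{-1}\|M_iC_i(\delta-\delta')\|_\omega\); alternatively, apply the implicit function theorem using Step 2. For the label, the ball is path-connected and the map \(\delta\mapsto x(\delta)\) is a continuous curve of regular roots starting from the root at \(\delta=0\). Uniqueness in the tube shows this root equals \(x_i^c\). The target branch is defined by continuation from a regular root, so along any segment from \(0\) to \(\delta\) the continued branch stays regular and coincides with \(x(\cdot)\) by local uniqueness from the implicit function theorem.

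The main obstacle I expect is making this label inheritance rigorous. One must argue that the paper's continuation rule cannot leave the tube or switch branches, which needs the tube-uniqueness of Step 2 combined with an open–closed argument on \([0,1]\). A secondary obstacle is the symmetrization in Step 2: it should be handled by noting that \(H^{\rm s}_{i\ell}\) is a symmetric Hessian, so \(G_{ip}\) is symmetric.
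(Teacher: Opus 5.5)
Your proposal is correct and follows essentially the same route as the paper. The self-map comes from the signed diagonal majorants together with the weighted-$\ell_1$ support function. Uniqueness comes from polarization plus the Perron-weighted max-norm, giving a Banach contraction. Regularity uses the identity $I-D_y\Phi_i=M_iJ_xF$ with a Neumann argument, and the branch label comes from continuation along the segment $\gamma\delta$. Your initial Brouwer step is redundant once the contraction is available. You also usefully make explicit two things the paper leaves implicit: the entrywise bound $|D_y\Phi_i|\le\mathcal L_i(\beta_i)$ behind the Neumann step, and the Lipschitz estimate in $\delta$.
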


\begin{proof}
	The verified matrix inequality in \eqref{eq:signed_dm_condition}, the square
	epigraphs,
	and
	\(
	\sup_{\|D\delta\|_1\le r_i}a^{\mathsf T}\delta
	=r_i\|D^{-\mathsf T}a\|_\infty
	\)
	show that \eqref{eq:component_selfmap} is a componentwise self-mapping
	certificate.

	Quadratic polarization and \eqref{eq:component_lipschitz} give
	\(|\Phi_i(y)-\Phi_i(y')|
	\le\mathcal L_i(\beta_i)|y-y'|\).  Let
	\(\|v\|_{\omega_i,\infty}:=\max_p|v_p|/\omega_{ip}\).  Condition
	\eqref{eq:perron_contraction} makes \(\Phi_i\) a strict contraction in this
	norm.  Banach's fixed-point theorem gives existence, uniqueness, and
	continuous parameter dependence.
	Finally,
	\begin{equation}
		I-D_y\Phi_i(y)=M_iJ_xF(\bar x_i+y,u,\widehat\xi_i+\delta),
		\label{eq:pf_jacobian_identity}
	\end{equation}
	and the left-hand side is nonsingular by the Neumann lemma.  Since \(M_i\)
	is nonsingular, so is \(J_xF\).  For any admissible \(\delta\), the path
	\(\gamma_\delta\mapsto\gamma_\delta\delta\), \(\gamma_\delta\in[0,1]\),
	remains in the convex sample ball.  Continuous
	parameter dependence therefore connects its unique roots to \(x_i^c\)
	without encountering a singularity, which proves the conditional
	branch-label statement.
\end{proof} 
Because \(F_i^0\) is retained, \(\bar x_i\) is a numerical center rather than
an assumed exact solution. The unique exact PF root \(x_i^c\), enclosed by
the verified isolation tube around \(\bar x_i\), belongs to the target branch,
whose identity is propagated between accepted iterates as follows.
\subsection{Exact Adjoint Safety-Matrix Identity}
\label{subsec:adjoint_identity}

Each lift coordinate used by the stability and operating-limit blocks has
the exact form
\begin{equation}
	z_a(x,u,\xi)
	=x^{\mathsf T}Z_ax+p_a^{\mathsf T}x
	+d_a^{\mathsf T}u+e_a^{\mathsf T}\xi+f_a .
	\label{eq:safety_output_quadratic}
\end{equation}
Here, \(Z_a\in\mathbb R^{n_x\times n_x}\),
\(p_a\in\mathbb R^{n_x}\), \(d_a\in\mathbb R^{n_u}\),
\(e_a\in\mathbb R^m\), and \(f_a\in\mathbb R\),
for \(a=1,\ldots,n_z\).
Case-specific linear coordinates are included by setting \(Z_a=0\); in the
Iva specialization, this covers the \(\upsilon_b\) terms in
\eqref{eq:iva_pencil} without approximation.
Writing \(Z_a^{\rm s}:=(Z_a+Z_a^{\mathsf T})/2\),
\(g_{ia}=2Z_a^{\rm s}\bar x_i+p_a\), and
\(\bar z_{ia}=z_a(\bar x_i,\bar u,\widehat\xi_i)\) gives
\begin{equation}
	z_a
	=\bar z_{ia}+g_{ia}^{\mathsf T}y
	+y^{\mathsf T}Z_a^{\rm s}y
	+d_a^{\mathsf T}\Delta u+e_a^{\mathsf T}\delta .
	\label{eq:safety_output_expansion}
\end{equation}
To eliminate the linear state term without approximating the PF graph,
solve the following adjoint system for
\(\lambda_{ia}\in\mathbb R^{n_x}\):
\begin{equation}
	J_i^{\mathsf T}\lambda_{ia}=g_{ia}.
	\label{eq:adjoint_system}
\end{equation}

\begin{lemma}[Exact adjoint safety-output identity]
	\label{lem:adjoint_identity}
	On the PF graph \eqref{eq:pf_exact_quadratic},
	\begin{equation}
		\boxed{\;
			z_a
			=z_{ia}^0
			+\widehat d_{ia}^{\mathsf T}\Delta u
			+\widehat e_{ia}^{\mathsf T}\delta
			+y^{\mathsf T}R_{ia}y
			\;}
		\label{eq:adjoint_identity}
	\end{equation}
	holds exactly, where
	\begin{subequations}\label{eq:adjoint_coefficients}
		\begin{align}
			z_{ia}^0&:=\bar z_{ia}-\lambda_{ia}^{\mathsf T}F_i^0,
			\label{eq:adjoint_z0}\\
			\widehat d_{ia}&:=d_a-B_i^{\mathsf T}\lambda_{ia},\qquad
			\widehat e_{ia}:=e_a-C_i^{\mathsf T}\lambda_{ia},
			\label{eq:adjoint_affine_coefficients}\\
			R_{ia}&:=Z_a^{\rm s}
			-\sum_{\ell=1}^{n_x}\lambda_{ia,\ell}H_{i\ell}^{\rm s}.
			\label{eq:adjoint_remainder_matrix}
		\end{align}
	\end{subequations}
\end{lemma}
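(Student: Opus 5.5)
The identity follows by adding a zero multiple of the PF residual to the output expansion \eqref{eq:safety_output_expansion}. On the PF graph, \eqref{eq:pf_exact_quadratic} gives
\[
0=F_i^0+J_iy+B_i\Delta u+C_i\delta+Q_i(y).
\]
Therefore, for the adjoint vector $\lambda_{ia}$ defined by \eqref{eq:adjoint_system},
\[
z_a=z_a-\lambda_{ia}^{\mathsf T}\bigl(F_i^0+J_iy+B_i\Delta u+C_i\delta+Q_i(y)\bigr).
\]
The plan is to substitute \eqref{eq:safety_output_expansion} into the right-hand side and collect terms by type: constant, linear in $y$, linear in $\Delta u$, linear in $\delta$, and quadratic in $y$.

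The key cancellation is in the linear state term. Its coefficient is $g_{ia}^{\mathsf T}y-\lambda_{ia}^{\mathsf T}J_iy=(g_{ia}-J_i^{\mathsf T}\lambda_{ia})^{\mathsf T}y$, which vanishes exactly by \eqref{eq:adjoint_system}. This step requires $J_i$ to be nonsingular, as assumed in Theorem~\ref{thm:pf_branch_ball}, so that $\lambda_{ia}$ exists and is unique.

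The remaining groups give the coefficients in \eqref{eq:adjoint_coefficients}:
\begin{itemize}
\item The constant term is $\bar z_{ia}-\lambda_{ia}^{\mathsf T}F_i^0=z_{ia}^0$. The residual $F_i^0$ is kept rather than set to zero, consistent with the floating-point center convention.
\item The $\Delta u$ terms combine to $(d_a-B_i^{\mathsf T}\lambda_{ia})^{\mathsf T}\Delta u=\widehat d_{ia}^{\mathsf T}\Delta u$.
\item The $\delta$ terms combine to $(e_a-C_i^{\mathsf T}\lambda_{ia})^{\mathsf T}\delta=\widehat e_{ia}^{\mathsf T}\delta$.
\item The quadratic terms are $y^{\mathsf T}Z_a^{\rm s}y-\sum_\ell\lambda_{ia,\ell}\,y^{\mathsf T}H_{i\ell}^{\rm s}y=y^{\mathsf T}R_{ia}y$, using $[Q_i(y)]_\ell=y^{\mathsf T}H_{i\ell}^{\rm s}y$.
\end{itemize}

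Exactness rests on one preliminary check: \eqref{eq:safety_output_expansion} itself involves no truncation. Expanding $(\bar x_i+y)^{\mathsf T}Z_a(\bar x_i+y)$ gives $\bar x_i^{\mathsf T}Z_a\bar x_i+2\bar x_i^{\mathsf T}Z_a^{\rm s}y+y^{\mathsf T}Z_a^{\rm s}y$, because a quadratic form depends only on the symmetric part of its matrix. The terms in $u$ and $\xi$ are affine, so writing $u=\bar u+\Delta u$ and $\xi=\widehat\xi_i+\delta$ yields the stated expansion. Likewise, \eqref{eq:pf_exact_quadratic} is exact because $F$ is quadratic in $x$ and affine in $(u,\xi)$. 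No approximation enters at any step, so the identity holds exactly for every point on the graph, including every root produced by Theorem~\ref{thm:pf_branch_ball}.
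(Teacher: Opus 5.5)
Your proof is correct and is the paper's argument: the adjoint equation \eqref{eq:adjoint_system} turns $g_{ia}^{\mathsf T}y$ into $\lambda_{ia}^{\mathsf T}J_iy$, and substituting the PF identity \eqref{eq:pf_exact_quadratic} for $J_iy$ gives the stated coefficients. Subtracting $\lambda_{ia}^{\mathsf T}$ times the zero residual is the same substitution, and your checks that \eqref{eq:safety_output_expansion} has no truncation and that $J_i$ is nonsingular are sound but go beyond what the lemma needs, since it takes the PF expansion and a solution $\lambda_{ia}$ as given.
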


\begin{proof}
	Equation \eqref{eq:adjoint_system} gives
	\(g_{ia}^{\mathsf T}y=\lambda_{ia}^{\mathsf T}J_i y\).  Substituting
	$$J_i y=-F_i^0-B_i\Delta u-C_i\delta-Q_i(y)$$ from
	\eqref{eq:pf_exact_quadratic} into \eqref{eq:safety_output_expansion} yields
	\eqref{eq:adjoint_identity}--\eqref{eq:adjoint_coefficients}.
\end{proof}

Thus the adjoint step does not replace or linearize the nonlinear PF
equations.  It removes the first-order state dependence only after the PF
equality is imposed, leaving an exact quadratic residual.  Only the distinct
lift directions appearing in the safety blocks require adjoint
right-hand sides.

To treat stability and operating limits uniformly, index their affine
symmetric blocks by \(\kappa\in\mathcal K_{\rm saf}\), and let \(n_\kappa\)
denote the order of block \(\kappa\):
\begin{equation}
	\mathsf M_\kappa(z,u)
	=\mathsf M_{\kappa0}+\mathsf M_{\kappa u}(u)
	+\sum_{a=1}^{n_z}z_a\mathsf B_{\kappa a}\succeq0.
	\label{eq:safety_block_pencil}
\end{equation}
Here, \(\mathsf M_{\kappa0}\), \(\mathsf M_{\kappa u}(u)\), and
\(\mathsf B_{\kappa a}\) belong to \(\mathbb S^{n_\kappa}\).
The stability block is
\(K^{\rm case}(z,u)-\tau_KI\); a scalar upper limit is represented by a
\(1\times1\) block after its strictness margin is absorbed; and an SOC limit
is represented by its standard arrow-PSD block.  These are assembled
blockwise in the implementation and are not formed as one dense matrix.
Let \(\widetilde\delta=D\delta\), and let \(\widehat E_i\) have row
\(a\) equal to \(\widehat e_{ia}^{\mathsf T}\).  Lemma~\ref{lem:adjoint_identity}
gives the exact block identity
\begin{equation}
	\mathsf M_\kappa
	=\mathsf M_{i\kappa}^{\rm c}(u)
	+\sum_{j=1}^{m}\widetilde\delta_j\mathsf M_{i\kappa j}
	+\mathcal R_{i\kappa}(y),
	\label{eq:adjoint_block_identity}
\end{equation}
where
\begin{subequations}\label{eq:adjoint_block_coefficients}
	\begin{align}
		\mathsf M_{i\kappa}^{\rm c}(u)
		&:=\mathsf M_{\kappa0}+\mathsf M_{\kappa u}(u)
		+\sum_{a=1}^{n_z}\bigl(z_{ia}^0
		+\widehat d_{ia}^{\mathsf T}\Delta u\bigr)\mathsf B_{\kappa a},
		\label{eq:block_center}\\
		\mathsf M_{i\kappa j}
		&:=\sum_{a=1}^{n_z}[\widehat E_iD^{-1}]_{aj}\mathsf B_{\kappa a},
		\label{eq:block_uncertainty_direction}\\
		\mathcal R_{i\kappa}(y)
		&:=\sum_{a=1}^{n_z}
		\bigl(y^{\mathsf T}R_{ia}y\bigr)\mathsf B_{\kappa a}.
		\label{eq:block_matrix_remainder}
	\end{align}
\end{subequations}
The nonlinear term in \eqref{eq:block_matrix_remainder} is an
operator-valued quadratic remainder.  Combining the output directions
before bounding this term retains cancellations that are lost when every
lifted coordinate is enclosed independently.

\subsection{Finite Matrix-Remainder Robust Containment}
\label{subsec:matrix_remainder}

We next construct a finite, verified lower bound for
\(\mathcal R_{i\kappa}(y)\).  First, fixed signed output majorants
\(d_{ia}^{\rm U},d_{ia}^{\rm L}\ge0\) satisfy
\begin{equation}
	\operatorname{Diag}(d_{ia}^{\rm U})-S_iR_{ia}S_i\succeq0,
	\qquad
	\operatorname{Diag}(d_{ia}^{\rm L})+S_iR_{ia}S_i\succeq0.
	\label{eq:output_dm_majorants}
\end{equation}
Hence, throughout \(\mathcal Y_i(\beta_i)\),
\begin{equation}
	-L_{ia}(\chi_i)
	\le y^{\mathsf T}R_{ia}y
	\le U_{ia}(\chi_i),
	\quad
	\begin{cases}
		U_{ia}(\chi_i)=(d_{ia}^{\rm U})^{\mathsf T}\chi_i,\\
		L_{ia}(\chi_i)=(d_{ia}^{\rm L})^{\mathsf T}\chi_i.
	\end{cases}
	\label{eq:signed_output_bounds}
\end{equation}
These one-sided bounds preserve residual asymmetry and provide a valid
coordinatewise enclosure.
For a matrix-level bound, define, for \(p,q=1,\ldots,n_x\),
\begin{equation}
	\mathsf C_{i\kappa,pq}
	:=\varsigma_{ip}\varsigma_{iq}
	\sum_{a=1}^{n_z}[R_{ia}]_{pq}\mathsf B_{\kappa a}.
	\label{eq:matrix_remainder_coefficients}
\end{equation}
For \(y=S_i\widetilde y\),
\begin{equation}
	\mathcal R_{i\kappa}(S_i\widetilde y)
	=\sum_p \widetilde y_p^2\mathsf C_{i\kappa,pp}
	+2\sum_{p<q}\widetilde y_p\widetilde y_q\mathsf C_{i\kappa,pq}.
	\label{eq:matrix_remainder_polynomial}
\end{equation}
Choose a fixed state-interaction core graph
\(\mathcal G_{i\kappa}^{\rm c}
=(\{1,\ldots,n_x\},\mathcal E_{i\kappa}^{\rm c})\).  Let
\(\mathbf C_{i\kappa}^{\rm c}\in\mathbb S^{n_xn_\kappa}\) be the block
matrix whose diagonal block \(p\) is
\(\mathsf C_{i\kappa,pp}\), whose \((p,q)\) block is
\(\mathsf C_{i\kappa,pq}\) for a retained edge, and whose remaining
off-diagonal blocks are zero.  Fixed matrices
\(P_{i\kappa p}\succeq0\) are accepted only if
\begin{equation}
	\mathbf C_{i\kappa}^{\rm c}
	+\operatorname{blkdiag}(P_{i\kappa1},\ldots,P_{i\kappa n_x})
	\succeq0.
	\label{eq:bdm_condition}
\end{equation}
The PSD block matrix in \eqref{eq:bdm_condition} is a degree-two Gram
certificate for a matrix polynomial, within the matrix sum-of-squares
framework of \cite{scherer2006matrixsos}.  The retained core and analytically
bounded tail exploit the shared quadratic state factors of the present
safety blocks.
For a symmetric matrix \(A\), define
\(|A|_{\rm m}:=(A^2)^{1/2}\).  The component matrix atoms are
\begin{equation}
	\Gamma_{i\kappa p}
	:=P_{i\kappa p}
	+\sum_{\substack{q=1\\q\ne p,\ \{p,q\}\notin
	\mathcal E_{i\kappa}^{\rm c}}}^{n_x}
	|\mathsf C_{i\kappa,pq}|_{\rm m}\succeq0,
	\label{eq:bdm_atoms}
\end{equation}
where each omitted undirected edge contributes once to each endpoint.
Congruence of \eqref{eq:bdm_condition} by
\(\operatorname{blkdiag}(\widetilde y_pI_{n_\kappa})\), together with
\(2\widetilde y_p\widetilde y_qA\succeq
-(\widetilde y_p^2+\widetilde y_q^2)|A|_{\rm m}\), yields
\begin{equation}
	\mathcal R_{i\kappa}(y)
	\succeq-\sum_{p=1}^{n_x}\chi_{ip}\Gamma_{i\kappa p}.
	\label{eq:component_matrix_remainder_bound}
\end{equation}
One closed-form feasible choice is
\begin{equation}
	\begin{aligned}
		P_{i\kappa p}^{\rm cf}
		&=(\mathsf C_{i\kappa,pp})_-
		+\sum_{\substack{q=1\\q\ne p,\ \{p,q\}\in
		\mathcal E_{i\kappa}^{\rm c}}}^{n_x}
		|\mathsf C_{i\kappa,pq}|_{\rm m},\\
		A_-&:=\frac{|A|_{\rm m}-A}{2}.
	\end{aligned}
	\label{eq:bdm_closed_form}
\end{equation}
satisfies \eqref{eq:bdm_condition}.  A small offline SDP may reduce these
matrices, but only a numerically verified feasible solution is retained for
the online problem.
For each sample--block pair, fix nonnegative split coefficients
\begin{equation}
	\lambda_{i\kappa}^{\rm box}
	+\lambda_{i\kappa}^{\rm mat}=1.
	\label{eq:remainder_split}
\end{equation}
They are fixed offline parameters rather than online variables; otherwise
products with \(\chi_i\) would be bilinear.  Introduce
\(X_{i\kappa a}\in\mathbb S^{n_\kappa}\), \(a=1,\ldots,n_z\), and define
\begin{subequations}\label{eq:signed_box_parameters}
	\begin{align}
		\mu_{i\kappa a}
		&:=\frac{\lambda_{i\kappa}^{\rm box}}{2}
		\bigl(U_{ia}(\chi_i)-L_{ia}(\chi_i)\bigr),\\
		\nu_{i\kappa a}
		&:=\frac{\lambda_{i\kappa}^{\rm box}}{2}
		\bigl(U_{ia}(\chi_i)+L_{ia}(\chi_i)\bigr),\\
		X_{i\kappa a}&\succeq
		\nu_{i\kappa a}\mathsf B_{\kappa a},
		\qquad
		X_{i\kappa a}\succeq
		-\nu_{i\kappa a}\mathsf B_{\kappa a},
		\label{eq:matrix_cube_epigraph}\\
		T_{i\kappa}(\chi_i)
		&:=\lambda_{i\kappa}^{\rm mat}
		\sum_{p=1}^{n_x}\chi_{ip}\Gamma_{i\kappa p}.
		\label{eq:matrix_remainder_allocation}
	\end{align}
\end{subequations}
Together with the shifted center terms, the domination LMIs in
\eqref{eq:matrix_cube_epigraph} give the Matrix-Cube safe counterpart for the
asymmetric interval box \cite{bental2002matrixcube}.
The coordinate-box part of the remainder is then bounded below by
\(\sum_{a=1}^{n_z}\mu_{i\kappa a}\mathsf B_{\kappa a}
-\sum_{a=1}^{n_z}X_{i\kappa a}\), while
\eqref{eq:component_matrix_remainder_bound} bounds the matrix-shaped part by
\(-T_{i\kappa}(\chi_i)\).  Therefore the finite robust counterpart is
\begin{equation}
	\boxed{
		\begin{aligned}
			\mathsf M_{i\kappa}^{\rm c}(u)
			+\varepsilon r_i\mathsf M_{i\kappa j}
			+\sum_{a=1}^{n_z}
			\bigl(\mu_{i\kappa a}\mathsf B_{\kappa a}-X_{i\kappa a}\bigr)
			&\succeq T_{i\kappa}(\chi_i),\\
			\varepsilon\in\{-1,+1\},\quad j=1,\ldots,m,
			&\quad \kappa\in\mathcal K_{\rm saf}.
	\end{aligned}}
	\label{eq:finite_vertex_lmis}
\end{equation}
Indeed,
\(\{\widetilde\delta:\|\widetilde\delta\|_1\le r_i\}
=\operatorname{conv}\{\pm r_i\mathbf e_j\}_{j=1}^m\), where
\(\mathbf e_j\) is the \(j\)-th standard basis vector,
so the affine uncertainty dependence is handled exactly by the \(2m\)
vertices.  The choices
\((\lambda^{\rm box},\lambda^{\rm mat})=(1,0)\) and \((0,1)\) recover the
signed coordinate-box and pure matrix-remainder certificates,
respectively.  With these offline quantities fixed,
\eqref{eq:component_selfmap}, \eqref{eq:perron_contraction},
\eqref{eq:component_square_epigraph}, and
\eqref{eq:signed_box_parameters}--\eqref{eq:finite_vertex_lmis} are linear,
SOC, or LMI constraints in the online variables.  No semi-infinite
constraint remains.

\subsection{Certified-Radius Guarantee}
\label{subsec:radius_guarantee}

\begin{theorem}[Certified sample-safe radius]
	\label{thm:sample_safe_radius}
	Fix sample \(i\).  Suppose that the declared stability-certificate theorem
	and all of its applicability conditions hold uniformly on the certified
	domain.  If the hypotheses of Theorem~\ref{thm:pf_branch_ball} hold, its
	exact center root carries the target-branch label, the adjoint solves and
	all numerical bounds have been verified, and
	\eqref{eq:output_dm_majorants}--\eqref{eq:finite_vertex_lmis} hold for every
	stability and operating block, then
	\begin{equation}
		\mathcal B_i(r_i)
		\subseteq\mathcal S^{\rm cert}(u)
		\subseteq\mathcal S^{\rm phys}(u).
		\label{eq:certified_ball_chain}
	\end{equation}
	Consequently,
	\begin{equation}
		\boxed{\quad
			0\le r_i\le d_i^{\rm cert}(u)\le d_i^{\rm phys}(u).
			\quad}
		\label{eq:certified_radius_order}
	\end{equation}
\end{theorem}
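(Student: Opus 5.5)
Fix an arbitrary \(\xi=\widehat\xi_i+\delta\) with \(\|D\delta\|_1\le r_i\). The plan is to verify each of the four defining conditions of \(\mathcal S^{\rm cert}(u)\) in \eqref{eq:certificate_safe_set} at \(\xi\), then chain the inclusions.

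\textbf{Step 1: branch, existence and regularity.} Theorem~\ref{thm:pf_branch_ball} gives a unique root \(x=\bar x_i+y\) with \(y\in\mathcal Y_i(\beta_i)\). Since the exact center root \(x_i^c\) carries the target label, this root lies on the target branch, so \(x=x_{\rm PF}(u,\xi)\). The same theorem gives nonsingularity of \(J_xF\) there. For the Iva route, the tube condition \(\bar\upsilon_{ib}-\varsigma_{i,\upsilon_b}\beta_{i,\upsilon_b}\ge\underline v_b\) places \(x\) on the positive magnitude branch. Hence the certificate's applicability conditions, assumed uniform on the certified domain, hold at \(x\).

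\textbf{Step 2: exact block identity.} Because \(x\) satisfies the PF graph \eqref{eq:pf_exact_quadratic}, Lemma~\ref{lem:adjoint_identity} applies exactly. With \(\widetilde\delta=D\delta\), each block then satisfies \eqref{eq:adjoint_block_identity}:
\[
\mathsf M_\kappa=\mathsf M_{i\kappa}^{\rm c}(u)+\sum_j\widetilde\delta_j\mathsf M_{i\kappa j}+\mathcal R_{i\kappa}(y).
\]
The remainder is split as \(\mathcal R_{i\kappa}=\lambda^{\rm box}_{i\kappa}\mathcal R_{i\kappa}+\lambda^{\rm mat}_{i\kappa}\mathcal R_{i\kappa}\), and each part is bounded below separately.

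\textbf{Step 2a: matrix part.} By \eqref{eq:component_matrix_remainder_bound}, \(\mathcal R_{i\kappa}(y)\succeq-\sum_p\widetilde y_p^2\Gamma_{i\kappa p}\). Since \(\widetilde y_p^2=y_p^2/\varsigma_{ip}^2\le\beta_{ip}^2\le\chi_{ip}\), the matrix part is bounded below by \(-T_{i\kappa}(\chi_i)\). It must be checked that \(\Gamma_{i\kappa p}\succeq0\), so that replacing \(\widetilde y_p^2\) by the larger \(\chi_{ip}\) preserves the inequality.

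\textbf{Step 2b: box part.} Write \(s_a:=y^{\mathsf T}R_{ia}y\). By \eqref{eq:output_dm_majorants} and \eqref{eq:signed_output_bounds}, together with \(y_p^2\le\varsigma_{ip}^2\chi_{ip}\), we have \(s_a\in[-L_{ia},U_{ia}]\). Then
\[
\lambda^{\rm box}s_a=\mu_{i\kappa a}+\theta_a\nu_{i\kappa a},\qquad |\theta_a|\le1.
\]
Since \(\theta_a\nu_{i\kappa a}\mathsf B_{\kappa a}\) is a convex combination of \(\pm\nu_{i\kappa a}\mathsf B_{\kappa a}\), both of which are dominated by \(X_{i\kappa a}\) via \eqref{eq:matrix_cube_epigraph}, it follows that \(\theta_a\nu_{i\kappa a}\mathsf B_{\kappa a}\succeq-X_{i\kappa a}\). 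So the box part is \(\succeq\sum_a(\mu_{i\kappa a}\mathsf B_{\kappa a}-X_{i\kappa a})\).

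\textbf{Step 3: vertex argument and PSD conclusion.} Write \(\widetilde\delta=\sum_j\sum_\varepsilon\pi_{j\varepsilon}\varepsilon r_i\mathbf e_j\), a convex combination of the \(2m\) vertices of the \(\ell_1\) ball. The vertex LMIs \eqref{eq:finite_vertex_lmis} are affine in \(\widetilde\delta\), so averaging them gives
\[
\mathsf M_{i\kappa}^{\rm c}+\sum_j\widetilde\delta_j\mathsf M_{i\kappa j}+\sum_a(\mu\mathsf B-X)\succeq T_{i\kappa}.
\]
Adding the lower bounds from Steps 2a and 2b yields \(\mathsf M_\kappa(z,u)\succeq0\) for every \(\kappa\). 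For the stability block this means \(K^{\rm case}\succeq\tau_KI\succ0\). For the operating blocks it gives \(g_\ell<0\), since the strictness margins were absorbed into those blocks; this bookkeeping must be stated explicitly. Thus \(\xi\in\mathcal S^{\rm cert}(u)\), and therefore \(\mathcal B_i(r_i)\subseteq\mathcal S^{\rm cert}(u)\).

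\textbf{Step 4: inclusions and distance ordering.} The inclusion \(\mathcal S^{\rm cert}\subseteq\mathcal S^{\rm phys}\) comes from the declared certificate theorem, applicable by Step 1; for Iva this is \cite[Th.~1]{bacic2026graph}. For the ordering, every \(\xi\in\mathcal F^{\rm cert}(u)\) lies outside \(\mathcal B_i(r_i)\), so \(\|D(\xi-\widehat\xi_i)\|_1>r_i\). Taking the infimum gives \(r_i\le d_i^{\rm cert}\). Since \(\mathcal F^{\rm phys}\subseteq\mathcal F^{\rm cert}\), we get \(d_i^{\rm cert}\le d_i^{\rm phys}\).

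\textbf{Main obstacle.} The delicate point is Step 1: making sure the stability theorem's applicability, including the positive-magnitude branch and the uniform Conditions~1, holds at every tube point rather than only at the anchor. The remaining steps are bookkeeping over the fixed certificates.
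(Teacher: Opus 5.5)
Your proposal is correct and takes the same route as the paper's proof. It uses Theorem~\ref{thm:pf_branch_ball} for the unique regular target-branch root in the tube, the exact adjoint block identity, the split box/matrix lower bounds on the quadratic remainder (your Steps~2a--2b unpack what the paper cites in one line), the \(2m\)-vertex convex-hull argument, and the closed-ball versus closed-failure-set argument for the distance ordering. One imprecision: the global inclusion \(\mathcal S^{\rm cert}(u)\subseteq\mathcal S^{\rm phys}(u)\), which is what yields \(d_i^{\rm cert}(u)\le d_i^{\rm phys}(u)\), rests on the theorem's uniform-applicability hypothesis, not on Step~1, since Step~1 only secures applicability at points of \(\mathcal B_i(r_i)\).
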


\begin{proof}
	Take any \(\xi=\widehat\xi_i+\delta\in\mathcal B_i(r_i)\).  By
	Theorem~\ref{thm:pf_branch_ball}, the target PF branch has a unique
	regular equilibrium with \(y\in\mathcal Y_i(\beta_i)\).  Lemma
	\ref{lem:adjoint_identity} and \eqref{eq:adjoint_block_identity} give the
	exact value of every safety block at that equilibrium.  Equations
	\eqref{eq:signed_output_bounds}--\eqref{eq:matrix_remainder_allocation} provide a valid lower
	bound for the full quadratic matrix remainder.  Since
	\(\widetilde\delta=D\delta\) lies in the convex hull of the \(2m\)
	vertices, the LMIs in
	\eqref{eq:finite_vertex_lmis} imply
	\(\mathsf M_\kappa(z,u)\succeq0\) for every
	\(\kappa\in\mathcal K_{\rm saf}\).  Thus all buffered operating limits and
	the fixed lifted-PSD stability certificate hold, proving the first
	inclusion in \eqref{eq:certified_ball_chain}.  The second inclusion follows
	from the model-specific certificate theorem.  A closed ball contained in
	the certified safe set cannot intersect its closed failure set, which gives
	\eqref{eq:certified_radius_order}.
\end{proof}

The rectangular-coordinate PF expansion, the adjoint identity on the PF
graph, and the \(2m\)-vertex reduction of the weighted-\(\ell_1\) affine
uncertainty are exact.  Conservatism enters through the component PF tube,
the finite matrix-remainder certificate, and, when the stability pencil
provides only a sufficient condition, the gap between
\(\mathcal S^{\rm cert}\) and \(\mathcal S^{\rm phys}\).  The contribution of
Theorem~\ref{thm:sample_safe_radius} is their composition into a certified
lower bound \(r_i\) on the sample-to-failure distance required by the
Wasserstein risk interface of Section~\ref{sec:risk_interface}.

\section{Sequential Certified WDRO-OPF}
\label{sec:wasserstein_master}

\subsection{Convex Master With Fixed Local Bounds}
\label{subsec:sequential_master}

At iteration \(k\), the local construction of Section~\ref{sec:certified_radii}
uses \(\bar u=u^{(k)}\) and \(\bar x_i=\bar x_i^{(k)}\) for
\(i\in\{0\}\cup[N]\), where \(i=0\) denotes the nominal center and
\(\widehat\xi_0:=\xi^{\rm nom}\).  Its bounds are
recomputed and fixed.  Let
\(\mathcal C_i^{(k)}(u,r_i,\zeta_i)\), \(i\in[N]\), denote the conic
conditions for \(\mathcal B_i(r_i)\subseteq\mathcal S^{\rm cert}(u)\),
where \(\zeta_i\) collects their auxiliary variables, and let
\(\mathcal C_0^{(k)}(u,\zeta_0)\) denote the nominal block, with \(\zeta_0\)
its auxiliary variables.  Write
\(\zeta=\operatorname{col}(\zeta_0,\ldots,\zeta_N)\).  Let
\(W_u\in\mathbb R^{n_u\times n_u}\) be a fixed
nonsingular dispatch scaling and \(\Delta_k>0\) the trust-region radius.  Define
\(\mathcal U^{(k)}:=\{u\in\mathcal U:
\|W_u(u-u^{(k)})\|_\infty\le\Delta_k\}\), and let \(\bar r_i^{(k)}\) be the
corresponding radius limit.  The iteration-$k$ master is
\begin{equation}
	\begin{aligned}
		\min_{u,r,t,s,\zeta}\quad & C(u)\\
		\mathrm{s.t.}\quad
		&u\in\mathcal U^{(k)},\quad \mathcal C_0^{(k)}(u,\zeta_0),\\
		&\mathcal C_i^{(k)}(u,r_i,\zeta_i),\quad
		0\le r_i\le\bar r_i^{(k)},
		&&i\in[N],\\
		&r_i\ge t-s_i,\quad s_i\ge0,
		&&i\in[N],\\
		&\rho+N^{-1}\textstyle\sum_i s_i\le\alpha t,
		\quad t\ge0.
	\end{aligned}
	\label{eq:sequential_certified_master}
\end{equation}
For fixed bounds and convex $C$ and $\mathcal U$, this is an SDP/SOCP.  Since
the PF anchors and bounds are updated between accepted iterates, the overall
procedure is sequential and does not provide a global optimum of the original
nonlinear WDRO-OPF.
For each \(i\in\{0\}\cup[N]\), let
\(\mathcal X_{i,\rm root}^{(k)}\) be a verified enclosure of the exact root
carrying the target-branch label at the accepted dispatch \(u^{(k)}\).  For a
candidate \(u^+\), set \(\Delta u^+=u^+-u^{(k)}\).  Let
\((\beta_i^0,\chi_i^0)\) define a strictly feasible starting PF tube and let
\((\beta_i^+,\chi_i^+)\) be the candidate tube variables evaluated at
zero uncertainty radius.  Using the same frozen PF bounds, define
\begin{equation}
	\begin{gathered}
		u(\gamma)=u^{(k)}+\gamma\Delta u^+,\qquad
		\beta_i(\gamma)=(1-\gamma)\beta_i^0+\gamma\beta_i^+,\\[-1mm]
		\chi_i(\gamma)=(1-\gamma)\chi_i^0+\gamma\chi_i^+,
		\qquad 0\le\gamma\le1,
	\end{gathered}
	\label{eq:pf_path_lift}
\end{equation}
with the remaining PF epigraph variables interpolated in the same way.

\begin{proposition}[PF branch-label propagation]
	\label{prop:branch_label}
	Suppose that the smooth PF equations, coordinates, discrete operating mode,
	preconditioner, and frozen local bounds remain unchanged along
	\eqref{eq:pf_path_lift}.  Assume that
	\(\mathcal X_{i,\rm root}^{(k)}
	\subseteq\bar x_i^{(k)}+\mathcal Y_i(\beta_i^0)\) and that both
	endpoint PF conditions satisfy the square epigraphs and have strictly
	positive self-mapping and Perron-contraction margins and positive tube
	widths.  Then the unique roots in the interpolated tubes form a
	continuous, locally smooth, and regular PF path from the labelled root to
	the candidate endpoint.  At \(u^+\), construct from an independent PF
	solve a verified root enclosure with half-width
	\(\Delta x_{i,\rm root}^+\in\mathbb R_+^{n_x}\),
	\(\mathcal X_{i,\rm root}^{+}=\{x:
	|x-\bar x_i^+|\le \Delta x_{i,\rm root}^+\}\).
	Let \(S_i^{(k)}=\operatorname{Diag}(\varsigma_i^{(k)})\) be the fixed scaling
	used in \(\mathcal Y_i\) at iteration \(k\).  If, componentwise,
	\begin{equation}
		|\bar x_i^+-\bar x_i^{(k)}|+\Delta x_{i,\rm root}^+
		\le S_i^{(k)}\beta_i^+,
		\label{eq:pf_endpoint_handoff}
	\end{equation}
	then its exact root is the endpoint of that path and inherits the
	target-branch label.
\end{proposition}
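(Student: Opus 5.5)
The plan is to reduce the proposition to a parametric version of Theorem~\ref{thm:pf_branch_ball} along the homotopy \eqref{eq:pf_path_lift}, with the uncertainty radius set to zero and the dispatch as the moving parameter. All frozen quantities ($M_i$, $G_{ip}$, $d^\varepsilon_{ip}$, $\varsigma_i$, $\omega_i$, $F_i^0$, $B_i$) are the same along the path, by the unchanged-data hypothesis.

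\textbf{Step 1: feasibility along the path.} The constraint system in the frozen data is convex in $(\Delta u,\beta_i,\chi_i)$:
\begin{itemize}
\item the self-mapping inequalities \eqref{eq:component_selfmap} at $r_i=0$ are affine in $\Delta u$, $\beta_i$, $\chi_i$;
\item the Perron condition \eqref{eq:perron_contraction} is affine in $\beta_i$, since $\mathcal L_i$ is linear in $\beta_i$;
\item the square epigraphs $\chi_{ip}\ge\beta_{ip}^2$ are convex.
\end{itemize}
Hence the convex combination $(u(\gamma),\beta_i(\gamma),\chi_i(\gamma))$ satisfies all of them for every $\gamma\in[0,1]$. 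The strict endpoint margins interpolate to strictly positive margins: a convex combination of strict affine inequalities stays strict, and the epigraph stays feasible by convexity. Theorem~\ref{thm:pf_branch_ball}, applied at each $\gamma$ with $\delta=0$ and dispatch $u(\gamma)$, then gives a unique root $x(\gamma)\in\bar x_i^{(k)}+\mathcal Y_i(\beta_i(\gamma))$ at which $J_xF$ is nonsingular.

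\textbf{Step 2: continuity and regularity of the path.} Uniqueness inside a tube does not by itself give continuity in $\gamma$, because the tube itself moves. I will argue locally. Fix $\gamma_0$. For $\gamma$ near $\gamma_0$, the map $\Phi_i(\cdot;u(\gamma),0)$ is a contraction in $\|\cdot\|_{\omega_i,\infty}$ on the common tube $\mathcal Y_i(\beta_i(\gamma_0))$, since the contraction factor is uniform. Because the self-mapping margin is strict and $\Phi_i$ is affine in $u$, the same tube is mapped into itself for all nearby $\gamma$. The parametric Banach theorem then yields continuity of the fixed point, and the fixed point coincides with $x(\gamma)$ by uniqueness in the smaller of the two tubes. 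Since $J_xF$ is nonsingular at $x(\gamma)$ by \eqref{eq:pf_jacobian_identity}, the implicit function theorem upgrades this to local smoothness.

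\textbf{Step 3: both ends of the path.} At $\gamma=0$, the assumption $\mathcal X^{(k)}_{i,\rm root}\subseteq\bar x_i^{(k)}+\mathcal Y_i(\beta_i^0)$ places the labelled exact root inside the tube. Uniqueness then forces $x(0)$ to be that labelled root, so the path starts on the target branch and carries the label by continuation. At $\gamma=1$, condition \eqref{eq:pf_endpoint_handoff} together with the triangle inequality gives
\[
|x-\bar x_i^{(k)}|\le|\bar x_i^+-\bar x_i^{(k)}|+\Delta x^+_{i,\rm root}\le S_i^{(k)}\beta_i^+
\]
for every $x\in\mathcal X^+_{i,\rm root}$. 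So the independently verified root lies in the endpoint tube, and by uniqueness it equals $x(1)$, the endpoint of the labelled continuous regular path.

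\textbf{Main obstacle.} I expect Step~2 to be the delicate part: showing that the moving tubes yield a single continuous selection rather than a family of isolated unique roots. The plan is the local common-tube argument above, which uses the strict self-mapping margins and the fact that $\beta_i(\gamma)$ depends continuously on $\gamma$. If strict self-mapping on a fixed tube fails near some $\gamma_0$, the fallback is to apply the implicit function theorem at $x(\gamma_0)$. That gives a local smooth branch, which stays inside the nearby tubes for $\gamma$ close to $\gamma_0$ because $x(\gamma_0)$ lies in the tube $\bar x_i^{(k)}+\mathcal Y_i(\beta_i(\gamma_0))$, and uniqueness then forces it to equal $x(\gamma)$. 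A compactness argument on $[0,1]$ patches these local branches together.
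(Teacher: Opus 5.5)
Your proposal is correct in substance and follows the paper's own proof: convexity of the frozen affine self-mapping and Perron constraints and of the square epigraphs along \eqref{eq:pf_path_lift}, Banach's theorem with \eqref{eq:pf_jacobian_identity} for one regular root per tube, strict interiority plus the implicit-function theorem for the path, and the handoff inclusion \eqref{eq:pf_endpoint_handoff} plus uniqueness at \(\gamma=1\). One repair is needed in Step~2: \(\mathcal Y_i(\beta_i(\gamma))\) and \(\mathcal Y_i(\beta_i(\gamma_0))\) need not be nested, since \(\beta_i^+-\beta_i^0\) may have mixed signs, so ``the smaller of the two tubes'' is not available; instead identify the common-tube fixed point with \(x(\gamma)\) by noting that the strict self-mapping margin puts \(x(\gamma_0)\) strictly inside its tube and then using continuity in \(\gamma\)---and it is this strict interiority, not mere membership, that your IFT fallback needs as well.
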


\begin{proof}
	The frozen self-mapping and contraction inequalities are affine in the
	interpolated quantities, while their epigraph constraints are convex;
	feasibility and the strict contraction margins therefore hold for every
	\(\gamma\in[0,1]\).  Banach's theorem and
	\eqref{eq:pf_jacobian_identity} give one regular root in each tube.
	Strict interiority and the implicit-function theorem join these roots into
	the stated path.
	Condition~\eqref{eq:pf_endpoint_handoff} places the new root enclosure in
	the old endpoint tube, whose root is unique, so the two roots coincide.
\end{proof}
This proposition certifies PF branch identity only; it does not assert that
the dispatch path satisfies the operating or stability constraints.

\subsection{Acceptance and Distributional Guarantee}
\label{subsec:computational_procedure}

At iteration $k$, verified local bounds are constructed from PF solutions at
$u^{(k)}$, and \eqref{eq:sequential_certified_master} is solved.  A candidate
is accepted only if independent PF solves verify all safety and Wasserstein
conditions and Proposition~\ref{prop:branch_label} holds for the nominal and
sample centers; otherwise the trust region is reduced and the bounds are
recomputed.  Verified continuation is used if the direct path or endpoint
handoff is inconclusive.  The algorithm returns the last accepted iterate
after two consecutive accepted iterates meet the prescribed tolerances with
positive certified margins.

   \section{Case Studies}
  \label{sec:case_studies}
  
  \subsection{Mechanism Study: Certified Radius Versus First Failure}
  \label{subsec:mechanism_radius}
  
  We use a two-bus lossless GFM system to quantify how closely the
  proposed certified radius approaches the first target-model failure
  distance.  Both voltage magnitudes are fixed at \(1\) p.u., and renewable
  and load forecast errors perturb the active-power transfer across the
  single line.  The uncertainty is measured in the same weighted
  \(\ell_1\) metric used by the Wasserstein ambiguity set.
 
  Let \(q=D\delta\).  The line transfer can be written as
  \begin{equation}
  	P(q)=P_0+\bm h^{\mathsf T}q,
  	\qquad
  	P_0=0.6,\qquad
  	\bm h=(-0.05,\ 0.05)^{\mathsf T}.
  	\label{eq:mechanism_transfer}
  \end{equation}
  For this two-bus high-voltage branch,
  \begin{equation}
  	P=b\sin\phi,
  	\label{eq:mechanism_pf}
  \end{equation}
  and the projected Iva matrix is positive definite precisely when
  \begin{equation}
  	\cos\phi>
  	\frac{2b}{1/\beta_q+2b}.
  	\label{eq:mechanism_iva_condition}
  \end{equation}
  The reactive-droop coefficient is selected so that the critical transfer is
  \(P_{\rm crit}=0.602\) p.u.  Hence the nearest weighted-\(\ell_1\)
  failure distance is available in closed form:
  \begin{equation}
  	\begin{aligned}
  		d_{\rm first}
  		&=
  		\min_{\bm h^{\mathsf T}q=P_{\rm crit}-P_0}
  		\|q\|_1                                                   \\
  		&=
  		\frac{P_{\rm crit}-P_0}{\|\bm h\|_\infty}
  		=
  		\frac{0.002}{0.05}
  		=0.04 .
  	\end{aligned}
  	\label{eq:mechanism_exact_distance}
  \end{equation}
  The opposite transfer direction reaches its boundary at a larger distance.
  The PF Jacobian remains regular and the operating limits retain positive
  margins at \(d_{\rm first}\); therefore, the first event is the loss of
  small-signal stability of the registered GFM model.
  
  Table~\ref{tab:mechanism_radius} compares this distance with the radius
  returned by the proposed P1 certificate.  The certified radius is
  \(0.03993374\), which captures \(99.8344\%\) of the exact distance.  Its
  relative conservatism is only \(0.1656\%\).
  
  \begin{table}[t]
  	\centering
  	\caption{Certified Radius Versus First Target-Model Failure Distance}
  	\label{tab:mechanism_radius}
  	\begin{tabular}{lc}
  		\toprule
  		Quantity & Value \\
  		\midrule
  		Stress fraction & \(0.995\) \\
  		Certified radius \(r_{\rm cert}\)
  		& \(0.0399337418\) \\
  		First failure distance \(d_{\rm first}\)
  		& \(0.0400000000\) \\
  		Absolute gap \(d_{\rm first}-r_{\rm cert}\)
  		& \(6.6258\times10^{-5}\) \\
  		Relative gap
  		& \(0.1656\%\) \\
  		Captured distance \(r_{\rm cert}/d_{\rm first}\)
  		& \(99.8344\%\) \\
  		\bottomrule
  	\end{tabular}
  \end{table}
  
  Fig.~\ref{fig:iva_mechanism} explains the corresponding stability
  mechanism.  The horizontal coordinate is the weighted-\(\ell_1\)
  displacement normalized by \(d_{\rm first}\).  As the normalized distance
  reaches one, the minimum eigenvalue of the Iva matrix and the spectral
  abscissa of the registered dynamic model cross zero at the same point.
  The certified radius lies immediately inside this boundary.
  
  \begin{figure*}[t]
  	\centering
  	\subfloat[Certificate margin and target-model spectral abscissa.]{
  		\includegraphics[width=0.485\textwidth]
  		{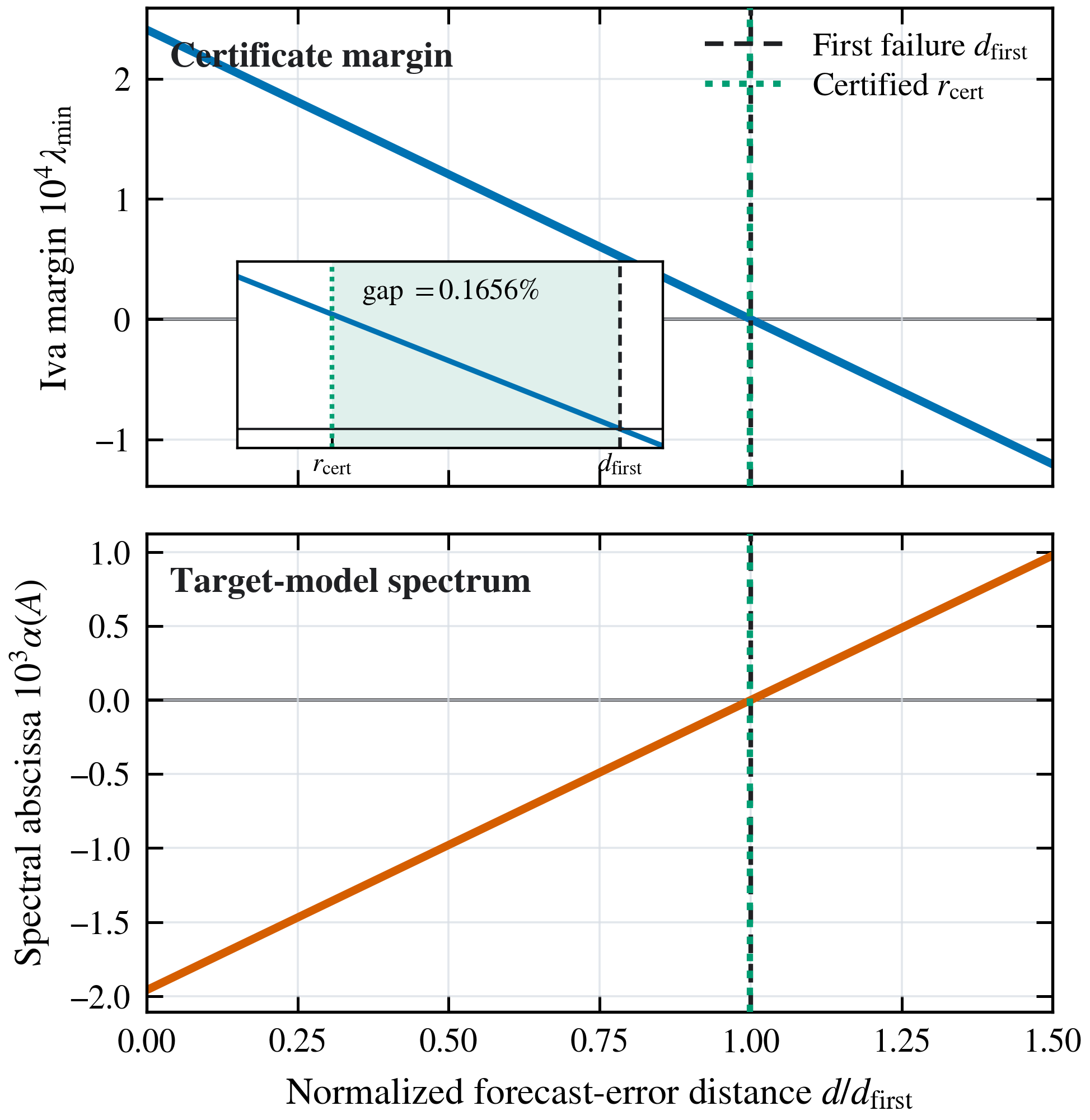}
  		\label{fig:iva_certificate_spectrum}
  	}
  	\hfill
  	\subfloat[Nonlinear trajectories on the stable and unstable sides.]{
  		\includegraphics[width=0.485\textwidth]
  		{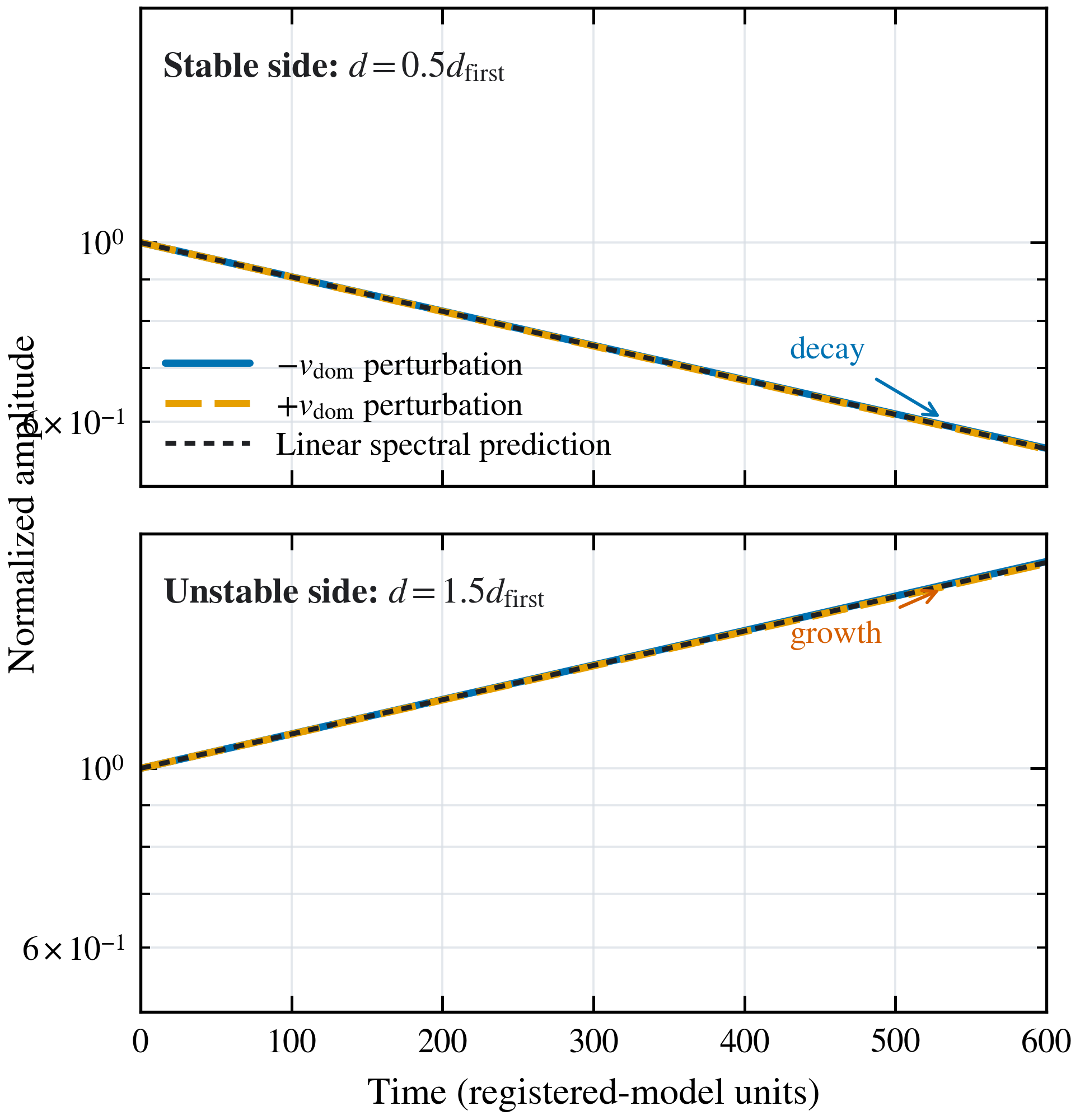}
  		\label{fig:iva_nonlinear_tds}
  	}
  	\caption{Mechanism study for the two-bus Iva--GFM system.
  		The dotted line marks the P1 certified radius and the dashed line marks
  		the closed-form first failure distance.  The Iva eigenvalue, target-model
  		spectrum, and nonlinear trajectories consistently identify the same
  		stability transition.}
  	\label{fig:iva_mechanism}
  \end{figure*}
  
  Immediately inside the boundary, at normalized distance \(0.9999\), the
  Iva minimum eigenvalue is \(2.41\times10^{-8}\) and the target-model
  spectral abscissa is \(-1.96\times10^{-7}\).  Immediately outside, at
  normalized distance \(1.0001\), the corresponding values are
  \(-2.41\times10^{-8}\) and \(1.96\times10^{-7}\).  At the crossing, the
  minimum singular value of the PF Jacobian is approximately \(0.694\),
  confirming that the stability boundary is reached while the equilibrium
  branch remains regular.
  
  The nonlinear simulations provide a complementary time-domain
  interpretation.  Perturbations initialized along both signs of the dominant
  mode decay on the stable side and grow on the unstable side.  The fitted
  growth rates agree with the linearized spectral abscissae within
  \(4.21\times10^{-6}\), with coefficients of determination above
  \(0.99999989\).  Together, these results show that the certified radius
  closely tracks the first small-signal stability failure while preserving a
  strict inner safety margin.
  All statements in this subsection refer to the registered reduced
  standard-droop GFM target model used by the Iva certificate.

\section{Conclusion}
\label{sec:conclusion}
This paper developed a certified framework that combines branch-preserving AC power-flow enclosures, lifted PSD stability containment, and exact Wasserstein sample-distance aggregation to obtain
tractable and rigorous distributionally robust small-signal stability guarantees.

\ifprintreferences
  \bibliographystyle{IEEEtran}
  \bibliography{references}
\fi

\end{document}